\documentclass{article}
\usepackage[final]{neurips_2025}

\usepackage[utf8]{inputenc} % allow utf-8 input
\usepackage[T1]{fontenc}    % use 8-bit T1 fonts
\usepackage{hyperref}       % hyperlinks
\usepackage{url}            % simple URL typesetting
\usepackage{booktabs}       % professional-quality tables
\usepackage{amsfonts}       % blackboard math symbols
\usepackage{wrapfig,lipsum}
\usepackage{amsthm,mathrsfs,amsmath,amssymb}
\usepackage{caption}
\usepackage{subcaption} % graphicx,
\usepackage{graphicx,graphics}
\usepackage{svg}
\usepackage[noend]{algpseudocode}  % Eviter les endif, endfor
\usepackage{algorithm}
\usepackage{algorithmicx}
\usepackage{bbm}
\usepackage{bm}
\usepackage{tikz}
\usetikzlibrary{3d, calc}
\usepackage{svg}
\usepackage{xspace}
\graphicspath{{figures/}}
\usepackage[capitalize,noabbrev]{cleveref}
\usepackage{xcolor}
\usepackage[table,dvipsnames]{xcolor}
\usepackage{paralist} %you get compactitem
\usepackage{multirow, makecell, colortbl, hhline}
\usepackage{dirtytalk}
\newtheorem{theorem}{Theorem}[section]

\newtheorem{lemma}[theorem]{Lemma}

\newcommand{\ourmethod}{\texttt{3BASiL}\xspace}

\newcommand{\slr}{$(\bfS + \bfL\bfR)$\xspace}

\definecolor{aurometalsaurus}{rgb}{0.43, 0.5, 0.5}
\definecolor{britishracinggreen}{rgb}{0.0, 0.26, 0.15}
\definecolor{burntumber}{rgb}{0.54, 0.2, 0.14}
\definecolor{cobalt}{rgb}{0.0, 0.28, 0.67}
\definecolor{bulgarianrose}{rgb}{0.28, 0.02, 0.03}
\definecolor{ceruleanblue}{rgb}{0.16, 0.32, 0.75}
\definecolor{darkgreen}{RGB}{0,128,0}
\definecolor{darkred}{rgb}{0.74, 0.2, 0.24}

\newcommand{\bfW}{\mathbf{W}}

\newcommand{\bfS}{\mathbf{S}}
\newcommand{\bfL}{\mathbf{L}}
\newcommand{\bfR}{\mathbf{R}}

\newcommand{\rk}[1]{\operatorname{rank}\left({#1}\right)}

\newcommand{\argmin}{\operatornamewithlimits{\arg\min}}

\newtheorem{thm}{Theorem}

\usepackage[dvipsnames]{xcolor}
\usepackage{tikz}
\usetikzlibrary{shapes.geometric, arrows}
\usetikzlibrary{positioning}
\usetikzlibrary {arrows.meta}
\usetikzlibrary{decorations.pathreplacing,calligraphy}
\usepackage{enumitem}

\title{%
  \begin{minipage}[c]{0.1\textwidth}
  \includegraphics[width=0.9\linewidth]{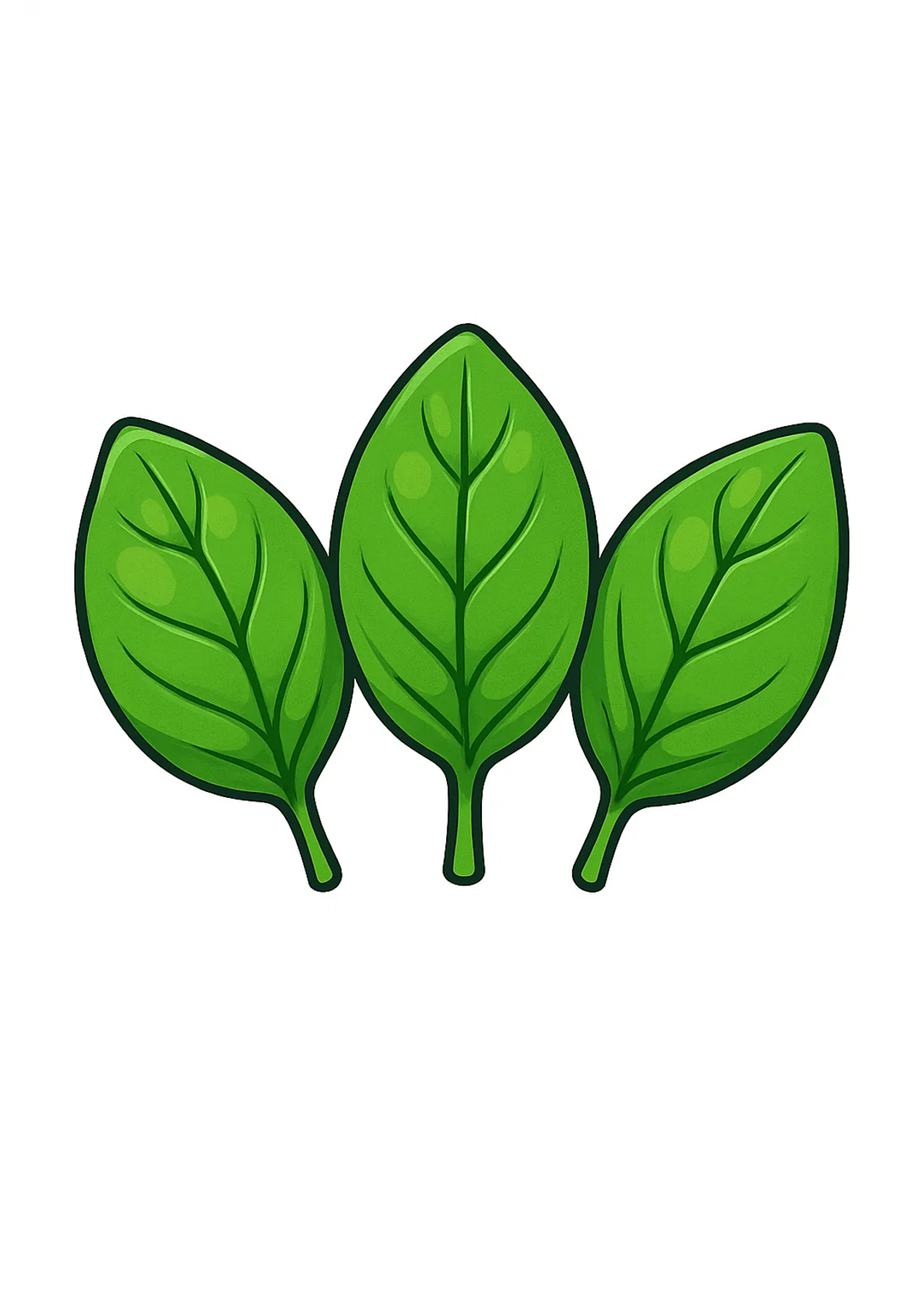}
  \end{minipage}
  \begin{minipage}[c]{0.85\textwidth}
  3BASiL: An Algorithmic Framework for Sparse plus Low-Rank Compression of LLMs
  \end{minipage}
}

\author{%
  Mehdi Makni \\
  Operations Research Center\\
  Massachusetts Institute of Technology\\
  \texttt{mmakni@mit.edu} \\
  \And
  Xiang Meng \\
  Operations Research Center\\
  Massachusetts Institute of Technology\\
  \texttt{mengx@mit.edu} \\
  \And
  Rahul Mazumder \\
  Operations Research Center\\
  Massachusetts Institute of Technology\\
  \texttt{rahulmaz@mit.edu} \\
}

\begin{document}
\maketitle
\begin{abstract}
Sparse plus Low-Rank $(\mathbf{S} + \mathbf{L}\mathbf{R})$ decomposition of Large Language Models (LLMs) has emerged as a promising direction in \textit{model compression}, aiming to decompose pre-trained model weights into a sum of sparse and low-rank matrices $\bfW \approx \bfS + \bfL\bfR$. Despite recent progress, existing methods often suffer from substantial performance degradation compared to dense models. In this work, we introduce \texttt{3BASiL-TM}, an efficient one-shot post-training method for $(\mathbf{S} + \mathbf{L}\mathbf{R})$ decomposition of LLMs that addresses this gap. Our approach first introduces a novel 3-Block Alternating Direction Method of Multipliers (ADMM) method, termed \texttt{3BASiL}, to minimize the layer-wise reconstruction error with convergence guarantees. 
We then design an efficient transformer-matching (\texttt{TM}) refinement step that jointly optimizes the sparse and low-rank components across transformer layers. This step minimizes a novel memory-efficient loss that aligns outputs at the transformer level.
Notably, the \texttt{TM} procedure is universal as it can enhance any $(\bfS + \bfL\bfR)$ decomposition, including pure sparsity. Our numerical experiments show that \texttt{3BASiL-TM} reduces the WikiText2 perplexity gap relative to dense LLaMA-8B model by over $30\%$ under a (2:4 Sparse + 64 LR) configuration, compared to prior methods. Moreover, our method achieves over 2.5x faster compression runtime on an A100 GPU compared to SOTA $(\mathbf{S} + \mathbf{L}\mathbf{R})$ method. 
Our code is available at {\small{\textcolor{blue}{\url{https://github.com/mazumder-lab/3BASiL}}}}.
\end{abstract}

\vspace{-1.0em}
\section{Introduction}
\label{sec:introduction}
\vspace{-0.8em}
\input{Tex/sec-introduction}

\vspace{-0.8em}
\section{Highly effective Sparse plus Low-Rank decomposition via ADMM}
\label{sec:3blockadmm}
\vspace{-0.8em}
% !TEX root = ../main.tex

\subsection{Problem formulation}
\vspace{-0.5em}
We compress the layers of an LLM sequentially, one at a time by minimizing the reconstruction error between the outputs of pre-trained weights and compressed ones on a set of given input activations.
Formally, let $\widehat{\mathbf{W}}$ represent the pre-trained weight matrix of a given layer, and $\mathbf{X}$ denote the input activations (i.e., output of previous layers) on a set of $N$ calibration samples. In our setting, the goal of layer-wise reconstruction is to find a \slr decomposition that minimizes the $\ell_2$ error between the outputs of the original and decomposed weights---this can be formulated as follows:
\begin{align}\label{eq:original}
\min_{\mathbf{S},\mathbf{L}} \quad & \frac{1}{2}\left\|\mathbf{X}\widehat{\mathbf{W}} - \mathbf{X}\left(\mathbf{S}+\mathbf{L}\right)\right\|_F^2+ \frac{\lambda}{2} \left\|\widehat{\mathbf{W}}- (\mathbf{S}+\mathbf{L})\right\|_F^2~~~~\text{s.t. } ~~~\mathbf{S}\in\mathcal{S},\quad \rk{\mathbf{L}}\le r.
\end{align}
Above $\|\cdot\|_F$ denotes the Frobenius norm, $\mathcal{S}$ denotes the set of matrices satisfying a specified sparsity constraint (e.g., unstructured sparsity with given sparsity level or N:M sparsity); $\mathbf{S}$ and $\mathbf{L}$ denote the sparse and low-rank components, respectively. Parameter $\lambda > 0$ encourages the decomposed weights to remain close to the pre-trained ones.
\vspace{-0.5em}
\subsection{A multi-Block ADMM approach for layer-wise reconstruction} 
\vspace{-0.5em}
The primary challenge in optimizing problem \eqref{eq:original} lies in the joint minimization of $\mathbf{S}$ and $\mathbf{L}$ under two complex constraints---sparsity and low-rank. To address this, we employ the Alternating Direction Method of Multipliers (ADMM), which enables separate updates of $\mathbf{S}$ and $\mathbf{L}$ at each iteration while maintaining their interdependence through a Lagrangian multiplier. This approach preserves the power of joint optimization while making the problem tractable. Our 3-block ADMM introduces an auxiliary variable $\mathbf{D}$ as a copy of the sparse component $\mathbf{S}$, reformulating problem \eqref{eq:original} as:
\begin{align}\label{eq:admm}
\min_{\mathbf{S},\mathbf{D},\mathbf{L}} \quad &\frac{1}{2}\left\|\mathbf{X}\widehat{\mathbf{W}} - \mathbf{X}\left(\mathbf{S}+\mathbf{L}\right)\right\|_F^2+ \frac{\lambda}{2} \left\|\widehat{\mathbf{W}}- (\mathbf{S}+\mathbf{L})\right\|_F^2 +  \mathbb{I}_{\mathcal{S}}(\mathbf{D}) \\
\text{s.t. } \quad& \mathbf{S}=\mathbf{D},\quad \rk{\mathbf{L}}\le r.\notag
\end{align}
where $\mathbb{I}_{\mathcal{S}}(\mathbf{D})$ is an indicator function that equals to infinity when $\mathbf{D}\notin\mathcal{S}$ and zero otherwise. The augmented Lagrangian function with dual variable $\mathbf{V}$ and a quadratic penalty parameter $\rho>0$ reads:
\begin{align*}
\mathcal{L}_\rho( \mathbf{S},\mathbf{L}, \mathbf{D}, \mathbf{V}) 
= \frac{1}{2}\left\|\mathbf{X}\widehat{\mathbf{W}} - \mathbf{X}\left(\mathbf{S}+\mathbf{L}\right)\right\|_F^2+ \frac{\lambda}{2} \left\|\widehat{\mathbf{W}}- (\mathbf{S}+\mathbf{L})\right\|_F^2 +\mathbb{I}_{\mathcal{S}}(\mathbf{D}) + \frac{\rho}{2} \left\|\mathbf{S}-\mathbf{D}+\frac{\mathbf{V}}{\rho}\right\|_F^2.
\end{align*}
The method proceeds by minimizing the augmented Lagrangian with respect to three  variables sequentially: the sparse component $\mathbf{S}$, the low-rank component $\mathbf{L}$, and sparse component's constrained copy $\mathbf{D}$, followed by a dual update (in variable $\mathbf{V}$). This sequential optimization over three variable blocks gives the method its name: 3-Block ADMM. At iteration $t$, the updates are:
 \begin{alignat*}{2} \label{eq:finalupdate}
&\mathbf{S}^{(t+1)} = \argmin\nolimits_{\mathbf{S}} \mathcal{L}_{\rho}( \mathbf{S},\mathbf{L}^{(t)}, \mathbf{D}^{(t)}, \mathbf{V}^{(t)})\quad &\mathbf{L}^{(t+1)} &= \argmin\nolimits_{\mathbf{L}} \mathcal{L}_{\rho}( \mathbf{S}^{(t+1)}, \mathbf{L}, \mathbf{D}^{(t)}, \mathbf{V}^{(t)}) \\
&\mathbf{D}^{(t+1)} = \argmin\nolimits_{\mathbf{D}} \mathcal{L}_{\rho}( \mathbf{S}^{(t+1)},\mathbf{L}^{(t+1)}, \mathbf{D}, \mathbf{V}^{(t)})\quad& \mathbf{V}^{(t+1)}& = \mathbf{V}^{(t)} + \rho(\mathbf{S}^{(t+1)}-\mathbf{D}^{(t+1)}).
\end{alignat*}
Below, we derive the updates. For notational simplicity, we denote $\mathbf{H}=\mathbf{X}^\top \mathbf{X} +\lambda \mathbf{I}$.

\noindent\textbf{$\mathbf{S}$-block update}~~
Since $\mathcal{L}_{\rho}(\mathbf{S},\mathbf{L}^{(t)}, \mathbf{D}^{(t)}, \mathbf{V}^{(t)})$ is a quadratic function of $\mathbf{S}$, we obtain the closed-form solution by setting the gradient to zero:
\begin{equation}
    \mathbf{S}^{(t+1)} = \left(\mathbf{H}+\rho \mathbf{I}\right)^{-1} \left(\mathbf{H}(\widehat{\mathbf{W}}-\mathbf{L}^{(t)}) - \mathbf{V}^{(t)} +\rho \mathbf{D}^{(t)}\right).
\end{equation}
\noindent\textbf{$\mathbf{L}$-block update}~~
Note that the $\mathbf{L}$-optimization subproblem can be reformulated as minimizing $\|\mathbf{H}^{1/2}(\widehat{\mathbf{W}}-\mathbf{S}^{(t+1)}-\mathbf{L})\|_F^2$ subject to the rank constraint. When $\mathbf{H}$ is full-rank (satisfied for any $\lambda > 0$), this problem has the closed-form solution (see \cref{sec:related-work} for a discussion about rank-reduced regression results) :
\begin{equation}\label{eq:L-update}
   \mathbf{L}^{(t+1)} =\mathbf{H}^{-1/2} P_r\left(\mathbf{H}^{1/2} (\widehat{\mathbf{W}}- \mathbf{S}^{(t+1)})\right),
\end{equation}
where $P_r$ denotes the best rank-$r$ approximation, which can be computed via SVD.\footnote{The closed-form solution in \cref{eq:L-update} can be used to improve other \slr methods like HASSLE-free (which employs gradient-descent on a reparameterized $\mathbf{L}=\mathbf{UV}^\top$)}

\noindent\textbf{$\mathbf{D}$-block update}~~
The optimization over $\mathbf{D}$ involves projecting $\mathbf{S}^{(t+1)} + \mathbf{V}^{(t)}/\rho$ onto the sparsity constraint set $\mathcal{S}$, which corresponds to magnitude-based pruning of $(\mathbf{S}^{(t+1)} + \mathbf{V}^{(t)}/\rho)$---we sort $[(\mathbf{S}^{(t+1)} + \mathbf{V}^{(t)}/\rho)_{ij}]^2$ and retain only the largest values. For unstructured pruning, we keep a predetermined fraction of the largest values; for N:M structured sparsity, we retain N largest values out of every M consecutive weights. %This projection is denoted as $P_{\mathcal{S}}(\cdot)$.

% In practice, we use a non-decreasing iteration-dependent penalty parameter $\rho_t$ (see \cref{sect:appendix-exp-details}).
In practice, we employ an iteration-dependent penalty parameter $\rho_t$, giving the following updates:
 \begin{alignat}{2}\label{eq:finalupdate}
&\mathbf{S}^{(t+1)}\!=\!\left( \mathbf{H}+\rho_t \mathbf{I}\right)^{-1} ( \mathbf{H}(\widehat{\mathbf{W}}-\mathbf{L}^{(t)}) - \mathbf{V}^{(t)} +\rho_t \mathbf{D}^{(t)})\,\, &\mathbf{L}^{(t+1)} &\!=\! \mathbf{H}^{-1/2} P_r(\mathbf{H}^{1/2} (\widehat{\mathbf{W}}- \mathbf{S}^{(t+1)})) \notag \\
&\mathbf{D}^{(t+1)} \!=\! P_\mathcal{S}(\mathbf{S}^{(t+1)} + \mathbf{V}^{(t)}/\rho_t)\,\,& \mathbf{V}^{(t+1)}& \!=\! \mathbf{V}^{(t)} + \rho_t(\mathbf{S}^{(t+1)}-\mathbf{D}^{(t+1)}).
\end{alignat}

%  \begin{alignat}{2}\label{eq:finalupdate}
% &\mathbf{S}^{(t+1)}\!=\!\left( \mathbf{H}+\rho_t \mathbf{I}\right)^{-1} ( \mathbf{H}(\widehat{\mathbf{W}}-\mathbf{L}^{(t)}) - \mathbf{V}^{(t)} +\rho_t \mathbf{D}^{(t)})\,\, &\mathbf{L}^{(t+1)} &\!=\! \mathbf{H}^{-1/2} P_r(\mathbf{H}^{1/2} (\widehat{\mathbf{W}}- \mathbf{S}^{(t+1)})) \notag \\
% &\mathbf{D}^{(t+1)} \!=\! P_\mathcal{S}(\mathbf{S}^{(t+1)} + \mathbf{V}^{(t)}/\rho_t)\,\,& \mathbf{V}^{(t+1)}& \!=\! \mathbf{V}^{(t)} + \rho_t(\mathbf{S}^{(t+1)}-\mathbf{D}^{(t+1)}).
% \end{alignat}
\vspace{-1.0em}
\noindent \textbf{Computational complexity}~~~ We implement several tricks to reduce the computational cost in the $\mathbf{S}$ and $\mathbf{L}$-update steps, 
which constitute the major computational cost of 3-Block ADMM algorithm. For the $\mathbf{S}$-update step, we adopt the approach of \cite{meng2024alps} by pre-computing (once) and storing the eigenvalue decomposition $\mathbf{H}=\mathbf{U}\mathbf{\Sigma}\mathbf{U}^\top$. This allows us to efficiently calculate the matrix inverse $(\mathbf{H} +\rho \mathbf{I})^{-1}=\mathbf{U} (\mathbf{\Sigma}+\rho \mathbf{I})^{-1}\mathbf{U}^\top$ for varying values of $\rho$ across iterations. For an efficient $\mathbf{L}$-update step, we store the matrices $\mathbf{H}^{-1/2}=\mathbf{U}\mathbf{\Sigma}^{-1/2}$, and $\mathbf{H}^{1/2}=\mathbf{\Sigma}^{1/2}\mathbf{U}^\top$ and employ a randomized-SVD procedure  \citep{halko2011finding} for numerical efficiency. 
In the context of LLMs, the weight matrices scale with the transformer's hidden dimension $N$. Our algorithm's per-iteration time complexity comprises: five matrix-matrix multiplications with complexity $O(N^3)$, a Randomized-SVD operation with complexity $O(N^2r)$ to enforce  rank constraint (using constant oversampling and power iterations as in \citep{halko2011finding}), and a projection onto $\mathcal{S}$ requiring at most $O(N^2 \log(N))$ for sorting and thresholding operations—across the entire matrix for unstructured sparsity or within blocks for semi-structured sparsity. The overall time complexity is $O(N^3)$.

\subsection{Convergence of ADMM}
\vspace{-0.5em}
Despite its appeal and usage,
%%%%widespread application, 
the convergence properties of 3-Block ADMM remain theoretically challenging. \cite{chen2016direct} demonstrated that without additional conditions, the algorithm may fail to converge, while later works \citep{lin2015sublinear,wang2018convergence} established various sufficient conditions for convergence.

We observe that our proposed 3-block ADMM approach can be reformulated as a standard 2-block ADMM by treating $(\mathbf{L},\mathbf{D})$ as a single variable block. This reformulation is valid because the Lagrangian is separable with respect to $\mathbf{L}$ and $\mathbf{D}$, meaning their joint minimization yields equivalent updates to sequential optimization (although 3-blocks remain the \say{natural} way to conceptualize the updates). While \cite{meng2024alps} established convergence guarantees for ADMM applied to layerwise pruning, their analysis addresses a different problem formulation than ours. Specifically, they apply ADMM solely to unstructured pruning, whereas our approach extends to \slr decomposition. Our framework includes a low-rank component with relatively complex updates in each iteration, which introduces additional mathematical challenges in convergence analysis that prevent direct application of the results in \cite{meng2024alps}.

%We observe that our proposed 3-block ADMM approach can be reformulated as a standard 2-block ADMM by treating $(\mathbf{L},\mathbf{D})$ as a single variable block. This reformulation is valid because the Lagrangian is separable with respect to $\mathbf{L}$ and $\mathbf{D}$, meaning their joint minimization yields equivalent updates to sequential optimization (3-blocks are still the \say{natural} way to look at the updates). Nevertheless, the convergence analysis remains non-trivial due to the presence of two complex non-convex constraints—sparsity and low-rank requirements—a scenario not addressed by existing ADMM convergence results.

To address this gap, we establish the following novel convergence guarantee that ensures the decomposition converges as long as we choose penalty parameter $\rho_t$ that increases sufficiently rapidly (refer to \cref{sect:proofadmm} for a complete proof).
\begin{thm}\label{thm:admm}
Let $\left\{\mathbf{S}^{(t)}\right\}_{t=0}^\infty$ and $\left\{\mathbf{L}^{(t)}\right\}_{t=0}^\infty$ be the sequence generated according to update rule \eqref{eq:finalupdate}. Suppose the penalty parameter $\rho_t$ chosen at iteration $t$ is non-decreasing and satisfies $\sum_{t=0}^\infty 1/\rho_t < \infty$. Then for any $t\ge 1$:
      \begin{equation}
             \max\{\| \mathbf{S}^{(t+1)} - \mathbf{S}^{(t)} \|_F, \| \mathbf{L}^{(t+1)} - \mathbf{L}^{(t)} \|_F\} \le {C}/{ \rho_{t-1} },
      \end{equation}
where $C$ is a constant depending on $\mathbf{X}$, $\widehat{\mathbf{W}}$, $\lambda$, $\rho_0$, and $\sum_{t=0}^\infty 1/\rho_t$.
In particular, there exists a matrix $\mathbf{\bar W}$ such that $\mathbf{S}^{(t)}+\mathbf{L}^{(t)} \rightarrow \mathbf{\bar W}$ as $t \rightarrow \infty$.
\end{thm}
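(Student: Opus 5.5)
The strategy is the one used for ADMM-based pruning in \cite{meng2024alps}: first show that all iterates stay uniformly bounded, then show that every successive difference is $O(1/\rho_{t-1})$. The sparsity and rank constraints enter only through projections, and projections are bounded but not Lipschitz. So we never rely on any contraction property of $P_\mathcal{S}$ or $P_r$; we only use crude norm bounds on their outputs. The dual update is what carries the $1/\rho$ decay.

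\textbf{Step 1: bound the dual variable.} We work with the scaled dual $\mathbf{V}^{(t)}/\rho_{t-1}$. From the $\mathbf{S}$-update optimality condition,
\begin{equation*}
\mathbf{H}(\mathbf{S}^{(t+1)}+\mathbf{L}^{(t)}-\widehat{\mathbf{W}}) + \mathbf{V}^{(t)} + \rho_t(\mathbf{S}^{(t+1)}-\mathbf{D}^{(t)})=0 .
\end{equation*}
Combining this with the dual update gives
\begin{equation*}
\mathbf{V}^{(t+1)} = \rho_t(\mathbf{D}^{(t)}-\mathbf{D}^{(t+1)}) - \mathbf{H}(\mathbf{S}^{(t+1)}+\mathbf{L}^{(t)}-\widehat{\mathbf{W}}).
\end{equation*}
Hence $\|\mathbf{V}^{(t+1)}\|_F \le \rho_t\|\mathbf{D}^{(t)}-\mathbf{D}^{(t+1)}\|_F + \|\mathbf{H}\|\,(\|\mathbf{S}^{(t+1)}\|_F+\|\mathbf{L}^{(t)}\|_F+\|\widehat{\mathbf{W}}\|_F)$.

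\textbf{Step 2: joint induction for boundedness.} We want a uniform bound on $\mathbf{S}^{(t)}$, $\mathbf{L}^{(t)}$, $\mathbf{D}^{(t)}$, and $\mathbf{V}^{(t)}/\rho_{t-1}$. The pieces are as follows.
\begin{itemize}
\item The $\mathbf{S}$-update is an averaging map: $(\mathbf{H}+\rho\mathbf{I})^{-1}\mathbf{H}$ and $(\mathbf{H}+\rho\mathbf{I})^{-1}\rho$ both have operator norm at most $1$. So $\|\mathbf{S}^{(t+1)}\|_F \le \|\widehat{\mathbf{W}}-\mathbf{L}^{(t)}\|_F + \|\mathbf{D}^{(t)}\|_F + \|\mathbf{V}^{(t)}\|_F/\rho_t$.
\item The $\mathbf{L}$-update satisfies $\|\mathbf{L}^{(t+1)}\|_F\le \kappa(\mathbf{H})^{1/2}\|\widehat{\mathbf{W}}-\mathbf{S}^{(t+1)}\|_F$, because $P_r$ is non-expansive in norm.
\item The $\mathbf{D}$-update satisfies $\|\mathbf{D}^{(t+1)}\|_F\le \|\mathbf{S}^{(t+1)}+\mathbf{V}^{(t)}/\rho_t\|_F$.
\end{itemize}
Chained naively, these bounds grow geometrically, which is the danger. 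The remedy is to avoid bounding $\mathbf{S}$ and $\mathbf{L}$ separately. Instead we bound the sum $\mathbf{S}^{(t+1)}+\mathbf{L}^{(t+1)}$ through the optimal-rank property: $\mathbf{L}^{(t+1)}$ minimizes $\|\mathbf{H}^{1/2}(\widehat{\mathbf{W}}-\mathbf{S}^{(t+1)}-\mathbf{L})\|_F$, so comparing against $\mathbf{L}=0$ (or $\mathbf{L}=\mathbf{L}^{(t)}$) yields a monotone-type bound. A similar comparison of the $\mathbf{D}$-step against feasible competitors controls $\mathbf{D}$. Together with the recursion from Step 1, this gives an inequality of the form $a_{t+1}\le (1+c/\rho_t)a_t + c'$, where $a_t$ denotes the relevant norms scaled appropriately. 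The condition $\sum 1/\rho_t<\infty$ makes $\prod(1+c/\rho_t)$ finite, which yields uniform boundedness with a constant depending on $\sum 1/\rho_t$, $\rho_0$, $\mathbf{X}$, $\widehat{\mathbf{W}}$, and $\lambda$.

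\textbf{Step 3: successive differences.} Once we have boundedness, say $\|\mathbf{V}^{(t)}\|_F\le B\rho_{t-1}$, the dual update gives
\begin{equation*}
\|\mathbf{S}^{(t+1)}-\mathbf{D}^{(t+1)}\|_F=\|\mathbf{V}^{(t+1)}-\mathbf{V}^{(t)}\|_F/\rho_t .
\end{equation*}
The numerator should actually be $O(1)$, not $O(\rho)$. To see this, read the optimality identity in Step 1 as $\mathbf{V}^{(t+1)}+\mathbf{H}(\mathbf{S}^{(t+1)}+\mathbf{L}^{(t)}-\widehat{\mathbf{W}}) = \rho_t(\mathbf{D}^{(t)}-\mathbf{D}^{(t+1)})$, and control the $\mathbf{D}$ increments by bounding $\mathbf{V}$ itself, rather than $\mathbf{V}/\rho$, inductively. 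This part needs care. Then:
\begin{itemize}
\item $\mathbf{S}^{(t+1)}-\mathbf{S}^{(t)}$ is expressed through the $\mathbf{S}$-update formula as a combination of $(\mathbf{H}+\rho\mathbf{I})^{-1}$ applied to bounded quantities, plus $\mathbf{D}^{(t)}-\mathbf{S}^{(t)}=O(1/\rho_{t-1})$. This gives $O(1/\rho_{t-1})$.
\item For $\mathbf{L}$, we use that the $\mathbf{S}$-update is a perturbation of order $1/\rho$ around $\mathbf{D}^{(t)}$, so $\mathbf{S}^{(t+1)}$ is close to $\mathbf{D}^{(t)}$ and hence to $\mathbf{S}^{(t)}$.
\end{itemize}

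\textbf{Main obstacle.} The hard part is the $\mathbf{L}$ increment, because $P_r$ is discontinuous when singular values tie. I would not try to prove $\|\mathbf{L}^{(t+1)}-\mathbf{L}^{(t)}\|_F$ small via continuity of $P_r$. Instead I would bound $\|\mathbf{H}^{1/2}(\mathbf{S}^{(t+1)}+\mathbf{L}^{(t+1)}) - \mathbf{H}^{1/2}(\mathbf{S}^{(t)}+\mathbf{L}^{(t)})\|_F$ directly, combining the optimality of each projection with the $O(1/\rho)$ movement of $\mathbf{S}$; if necessary, the $\mathbf{L}$ claim would be stated for the combined iterate.

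\textbf{Conclusion.} Summability of $1/\rho_t$ then makes $\mathbf{S}^{(t)}+\mathbf{L}^{(t)}$ a Cauchy sequence, so the limit $\bar{\mathbf{W}}$ exists.
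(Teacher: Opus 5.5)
Your outline (uniform bounds first, then $O(1/\rho_{t-1})$ increments) has the same shape as the paper's proof. The gap is in the estimate that carries the whole argument, which you defer with ``this part needs care'': showing that the \emph{unscaled} dual $\mathbf{V}^{(t)}$ stays $O(1)$. Your Step 1 identity is circular on its own, since it only trades $\mathbf{V}^{(t+1)}$ for $\rho_t(\mathbf{D}^{(t)}-\mathbf{D}^{(t+1)})$. Comparing the $\mathbf{L}$-step with $\mathbf{L}=0$ does not close the loop either.

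The missing idea is to compare the $\mathbf{D}$-step against the specific feasible competitor $\mathbf{D}^{(t)}$. Set $\mathbf{Z}=\mathbf{S}^{(t+1)}+\mathbf{V}^{(t)}/\rho_t$. Rearranging the $\mathbf{S}$-update gives
\[
\mathbf{Z}-\mathbf{D}^{(t)}=\frac{1}{\rho_t}\Bigl(\mathbf{I}+\frac{\mathbf{H}}{\rho_t}\Bigr)^{-1}\Bigl(\mathbf{H}\widehat{\mathbf{W}}-\mathbf{H}\mathbf{L}^{(t)}-\mathbf{H}\mathbf{D}^{(t)}+\frac{\mathbf{H}\mathbf{V}^{(t)}}{\rho_t}\Bigr),
\]
so $\mathbf{Z}$ lies within $O(1/\rho_t)$ of $\mathbf{D}^{(t)}\in\mathcal{S}$. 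Hence $\|\mathbf{V}^{(t+1)}\|_F/\rho_t=\|\mathbf{D}^{(t+1)}-\mathbf{Z}\|_F\le\|\mathbf{D}^{(t)}-\mathbf{Z}\|_F$. Let $a_t=\|\mathbf{D}^{(t)}\|_F+(\|\mathbf{V}^{(t)}\|_F+\|\mathbf{V}^{(t-1)}\|_F)/\rho_{t-1}$. Then this inequality gives both $\|\mathbf{V}^{(t+1)}\|_F\le c(1+a_t)$ and $\|\mathbf{D}^{(t+1)}-\mathbf{D}^{(t)}\|_F\le 2c(1+a_t)/\rho_t$. Here $\mathbf{L}^{(t)}$ is absorbed into $a_t$ via $\mathbf{S}^{(t)}=\mathbf{D}^{(t)}+(\mathbf{V}^{(t)}-\mathbf{V}^{(t-1)})/\rho_{t-1}$ and $\|P_r(\mathbf{A})\|_F\le\|\mathbf{A}\|_F$. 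The resulting recursion is $a_{t+1}\le(1+3c/\rho_t)a_t+3c/\rho_t$. The form you wrote, with a constant additive term $c'$, allows linear growth in $t$ and does \emph{not} give a uniform bound; the additive term must be summable. Once $\mathbf{V}=O(1)$ is established, your Step 3 argument for $\mathbf{S}^{(t+1)}-\mathbf{S}^{(t)}$ goes through.

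For the $\mathbf{L}$ increments your caution is justified. The paper's proof gets this bound by treating $P_r$ as $1$-Lipschitz in Frobenius norm, which fails at singular-value ties: take $r=1$ with $\mathrm{diag}(1+\epsilon,1)$ versus $\mathrm{diag}(1,1+\epsilon)$. However, your fallback does not escape the difficulty. Write $\mathbf{A}^{(t)}=\mathbf{H}^{1/2}(\widehat{\mathbf{W}}-\mathbf{S}^{(t)})$. Then $\mathbf{H}^{1/2}(\mathbf{S}^{(t)}+\mathbf{L}^{(t)})=\mathbf{H}^{1/2}\widehat{\mathbf{W}}-(\mathbf{A}^{(t)}-P_r(\mathbf{A}^{(t)}))$, and $\mathbf{A}\mapsto\mathbf{A}-P_r(\mathbf{A})$ jumps exactly where $P_r$ does. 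Since $\mathbf{S}$ moves only by $O(1/\rho_{t-1})$, small increments of $\mathbf{S}+\mathbf{L}$ are equivalent to small increments of $\mathbf{L}$. Optimality of the projection controls only the optimal value $\min_{\operatorname{rank}(\mathbf{M})\le r}\|\mathbf{A}^{(t)}-\mathbf{M}\|_F$, which is $1$-Lipschitz in $\mathbf{A}$, not the minimizer. So your plan does not reach $\mathbf{S}^{(t)}+\mathbf{L}^{(t)}\to\bar{\mathbf{W}}$ either.

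What the corrected boundedness argument does give rigorously is that $\mathbf{S}^{(t)}$ is Cauchy, so $\mathbf{A}^{(t)}\to\bar{\mathbf{A}}$. If $\sigma_r(\bar{\mathbf{A}})>\sigma_{r+1}(\bar{\mathbf{A}})$, a Wedin/Davis--Kahan perturbation bound makes $P_r$ locally Lipschitz near $\bar{\mathbf{A}}$. That yields $\|\mathbf{L}^{(t+1)}-\mathbf{L}^{(t)}\|_F=O(1/\rho_{t-1})$ for large $t$, with a constant that also depends on this spectral gap. Without some such gap condition, neither your argument nor the paper's establishes the $\mathbf{L}$ part of the statement.
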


\vspace{-0.8em}
\section{Transformer-level matching}
\label{sec:transmatch}
\vspace{-0.8em}
% !TEX root = ../main.tex
After layer-wise pruning, LoRA can directly refine the low-rank components in the \slr decomposition for task adaptation. However, the sparse components are not well-optimized by this process, as they are determined solely via layer-wise objectives. These layer-wise objectives are imperfect proxies for the true end-to-end loss function. On the other hand, fully optimizing the sparse components using the true end-to-end loss is computationally expensive and requires a full back-propagation on the entire network.
To address this limitation, we introduce an efficient \textit{transformer-matching} refinement step that leverages transformer-level information to enhance the sparse components. This procedure is efficient because it requires comparable CUDA memory and runtime to the compression algorithms themselves.

Our transformer-matching procedure jointly optimizes all sparse and low-rank components across layers within a transformer block to better match the original transformer's output. It acts as an intermediate loss function between layer-wise proxies and the true end-to-end loss. This approach can enhance any \slr decomposition, including pruning (where $\mathbf{LR} = \mathbf{0}$). \cref{fig:transformer-matching-universality} illustrates the performance gains obtained after applying \texttt{TM} to state-of-the-art one-shot \slr decomposition algorithms. In \cref{tab:compression-rho}, we show results of applying \texttt{transformer-matching} to pruning algorithms with pure sparsity constraints like WandA \citep{sun2023simple}, SparseGPT \citep{frantar2023sparsegpt}, and ALPS \citep{meng2024alps} highlighted in dark red.

Formally, for each transformer block $T_i$ with $L$ layers, after obtaining sparse and low-rank components $\{\mathbf{S}^{(i,\ell)}, \mathbf{L}^{(i,\ell)}\}_{\ell=1}^{L}$ through layer-wise pruning, we denote the support of sparse components as $\mathcal{S}^{(i,\ell)} = \operatorname{Supp}(\mathbf{S}^{(i,\ell)})$. Let $\mathbf{X}_i$ represent the outputs from the previously compressed transformer block $T_{i-1}$. We then refine these components using a transformer-level reconstruction loss:
\begin{equation}\label{eq:transformer-matching}
\begin{aligned}
  \min_{\{\mathbf{S}^{(i,\ell)}, \mathbf{L}^{(i,\ell)}\}_{\ell=1}^{L}}\,\,\,\, &  \left\|T_i\left(\mathbf{X}_i; \{\mathbf{W}^{(i,\ell)}\}_{\ell=1}^{L}\right) - T_i\left(\mathbf{X}_i; \{\mathbf{S}^{(i,\ell)} + \mathbf{L}^{(i,\ell)}\}_{\ell=1}^{L}\right)\right\|_F^2,\\
 \text{s.t.}\qquad& \operatorname{Supp}(\mathbf{S}^{(i,\ell)}) \subset \mathcal{S}^{(i,\ell)}, \quad \text{rank}(\mathbf{L}^{(i,\ell)}) \leq r^{(i, \ell)}
\end{aligned}
\end{equation}
where this constraint optimizes the weights of the decomposed components. 
Due to the non-linear activations between layers, we use gradient-based optimization methods such as Adam. Nonetheless, this optimization remains computationally efficient as it is performed using iteratively chunks of the small calibration dataset used for compression. Additionally, the forward/backward passes are limited to only one transformer block.
The transformer-matching approach offers two key advantages. First, it creates a more accurate proxy of the original loss function by directly minimizing the discrepancy between the original and compressed transformer outputs, resulting in higher-performance pruned models. Second, it reduces accumulated errors—introduced in layer-wise pruning where input activations $\mathbf{X}$ are computed from outputs of previously \textit{pruned} layers—by ensuring that activations fed into subsequent layers more faithfully match those of the dense model:
\begin{equation}\label{eq:similar-activations}
 T_i\left(\mathbf{X}_i; \{\mathbf{S}^{(i,\ell)} + \mathbf{L}^{(i,\ell)}\}_{\ell=1}^{L}\right) = \mathbf{X}_{i+1} ~~\approx~~ \mathbf{X}^{\text{(oracle)}}_{i+1} = T_i\left(\mathbf{X}_i; \{\mathbf{W}^{(i,\ell)}\}_{\ell=1}^{L}\right),
\end{equation}
therefore providing better activation statistics for compression on subsequent transformers  compared to layer-wise reconstruction which only matches weight matrices layer by layer.

After transformer-matching, the refined sparse components $\mathbf{S}^{(i,\ell)}$ remain fixed during downstream fine-tuning, while the low-rank components $\mathbf{L}^{(i,\ell)}$  serve as smart initializations for efficient LoRA adaptation to specific tasks.
\begin{figure}[h]
    \centering
    \vspace{-0.75em}
    \includegraphics[width=\linewidth]{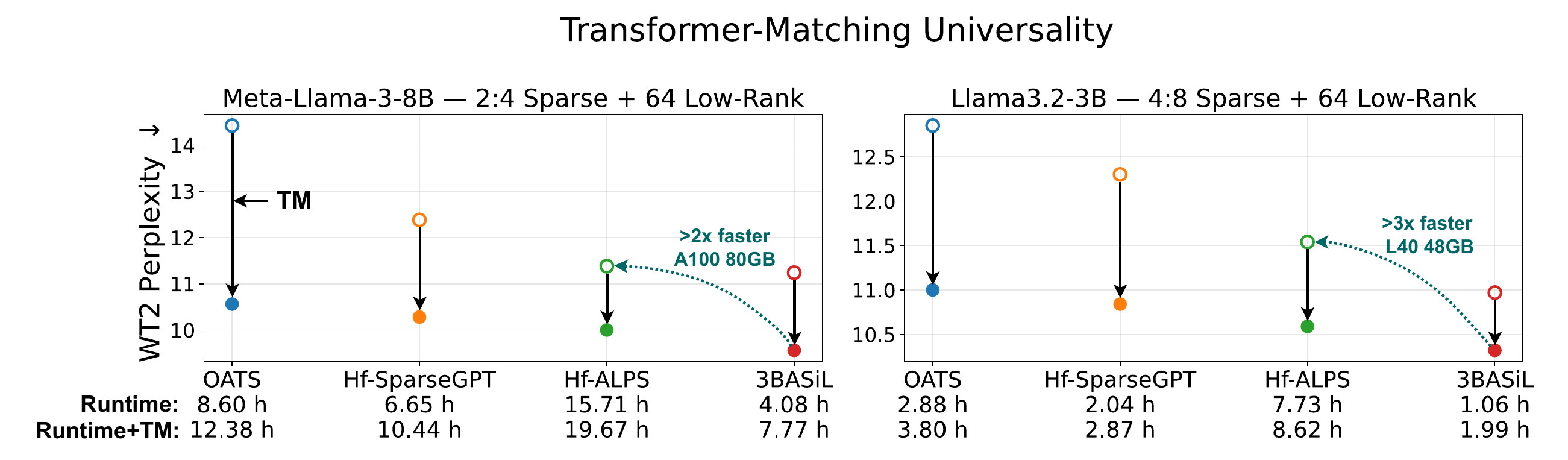}
    \caption{{\small{Our transformer matching (\texttt{TM}) procedure improves any one-shot \slr decomposition method (see baselines in \cref{sec:experimental-results}) with a small computational overhead. Circled markers represent standard \slr methods, while filled markers indicate their \texttt{TM}-enhanced versions. Black arrows illustrate performance gains due to \texttt{TM}. 
    The compression runtimes are reported in hours. Llama3-8B models were run on a A100 GPU, while Llama3.2-3B were run on a L40 GPU.
    Our proposal \texttt{3BASiL-TM}, remains significantly faster: \textbf{(left)} over 2$\times$ speedup on an A100 80GB for the Llama3-8B model decomposed to (2:4+64LR) configuration, and \textbf{(right)} over 3$\times$ speedup on an L40 48GB for the Llama3.2-3B model decomposed to (4:8+64LR) configuration (both compared to Hf-ALPS).}}}
    \label{fig:transformer-matching-universality}
    \vspace{-1.5em}
\end{figure}

\vspace{-0.5em}
\section{Experimental results}
\label{sec:experimental-results}
\vspace{-0.8em}
% !TEX root = ../main.tex
\subsection{Experimental setup} \label{sect:exp-setup}
\vspace{-0.5em}
\noindent\textbf{Models and LLM Evaluation Protocol}~~~ 
To rigorously assess the effectiveness of our proposed approach \ourmethod and \textit{transformer-matching} (\texttt{TM}) procedure, we conducted extensive experiments on the Llama-3 and Llama-3.2 model families \citep{dubey2024llama} and scaled results in one experiment to a OPT-30B \citep{opt} model, hence covering architectures with number of parameters ranging from 1B to 30B. Following the widely adopted setup introduced by \cite{frantar2023sparsegpt} for one-shot compression, we select the calibration set consisting of  128 randomly sampled text segments (2048 tokens each) from the C4 \citep{JMLR:v21:20-074} train dataset's first shard. This calibration set is shared across all evaluated compression methods to ensure consistency.

We adopt two evaluation criteria: (1) \textit{perplexity} as a foundational measure of language modeling quality, and (2) \textit{zero-shot task performance} to assess practical downstream capabilities post-compression. Perplexity is measured using three standard benchmarks:  WikiText2 \citep{merity2017pointer}, Penn Treebank \citep{Marcus1994}, and C4 validation samples, computed using HuggingFace’s full-stride perplexity protocol \citep{Perplexity}. For zero-shot evaluation, we utilize the LM Harness framework \citep{gao10256836framework} on a diverse suite of eight zero-shot tasks: PIQA \citep{bisk2020piqa}, ARC-Easy/Challenge \citep{clark2018think}, HellaSwag \citep{zellers2019hellaswag}, Winogrande \citep{sakaguchi2021winogrande}, RTE \citep{poliak2020survey}, OpenbookQA \citep{banerjee2019careful}, and BoolQ \citep{clark2019boolq}. We report individual scores for each benchmark and the average across all tasks. 
 
For perplexity, $(\downarrow)$ lower values are preferred. For zero-shot tasks, $(\uparrow)$ higher values are preferred.

\noindent\textbf{Baselines}~~ Our main baselines are OATS \citep{zhang2024oats}, HASSLE-free-SparseGPT (Hf-SparseGPT) and HASSLE-free-ALPS (Hf-ALPS)---the latter two use pruning approaches SparseGPT \citep{frantar2023sparsegpt} and ALPS \citep{meng2024alps}, respectively, in the sparsification step of the alternating minimization algorithm proposed by \cite{makni2025hasslefree}. 

For all these baselines, we follow the original configuration and perform $80$ steps of alternating minimization. For HASSLE-free methods, we propose an improved implementation that replaces their original parameterization of $\mathbf{L} = \mathbf{UV}^\top$ and gradient-based optimization with the closed-form solution provided in \cref{eq:L-update}. This modification leads to improved compression runtime and better downstream LLM evaluation metrics--see~\cref{tab:our-hassle-free}. Under this improved implementation, the method EoRA \citep{DBLP:journals/corr/abs-2410-21271}, which applies the update in \cref{eq:L-update} once after one round of compression, reduces to HASSLE-free (alternating minimization approach) with a number of iterations equal to one. EoRA is the fastest \slr method but underperforms HASSLE-free which uses more alternating minimization steps (default=80), and hence there is a large gap compared to our approach on most model/configuration settings. We show some results of EoRA in \cref{tab:our-hassle-free}.

More details on the implementation of \ourmethod, \texttt{TM} and the baselines (with improved implementation) are provided in \cref{sect:appendix-exp-details}.

\vspace{-0.75em}
\subsection{Numerical results}
\vspace{-0.75em}

Our evaluation focuses primarily on (N:M $+ \, \mathbf{LR}$) decompositions, which enable efficient GPU acceleration via specialized CUDA kernels \citep{mozaffari2024slope, makni2025hasslefree}. We evaluate both one-shot compression performance and downstream LoRA fine-tuning capabilities. Additionally, we demonstrate the generality of our approach through experiments with unstructured sparsity and integration with sparsity allocation methods.
The downstream LoRA experiments have been motivated by recent studies \citep{li2023loftq, guo2023lq, saha2024compressing} suggesting that decompositions of the form $\mathcal{C}(\mathbf{W}) + \mathbf{LR}$ are \underline{LoRA-aware}: i.e. low-rank components obtained from compression can act as \textit{smart initialization} to improve downstream LoRA fine-tuning. Further numerical experiments where we ablate on \texttt{TM} and LoRA fine-tuning for \slr methods can be found in \cref{sect:appendix-more-experiments}.

\noindent\textbf{One-shot (Sparse + LR) results}~~~  
We compare \ourmethod to prior \slr decomposition methods in the one-shot compression setting—i.e., without fine-tuning. \cref{tab:perplexity-comparison} reports results for LLaMA3.2 family under various (N:M + 64LR) configurations. \cref{tab:oneshot-nm-64} and \cref{fig:c4-before-lft} show results for similar configurations for the LLaMA3-8B model. \ourmethod reduces perplexity by up to 8\% compared to previous SOTA (due to better layer-wise reconstruction---see \cref{fig:attn-layers-loss} and \cref{fig:mlp-layers-loss}), with the \texttt{TM} step yielding further dramatic improvements of up to 40\% perplexity reduction. 

We also compare \slr decompositions with semi-structured pure pruning methods under a fixed compression ratio $\rho = 50\%$. Results in \cref{tab:compression-rho} show that \texttt{3BASiL-TM} achieves the best compression-performance trade-off under (3:8 + LR) configurations among different \slr methods. Additionally, we expand our \slr experiments to include a (2:4 + 112) configuration for OPT-30B model \citep{opt}. This configuration uses a 1.56\% Low-Rank Adapter (hidden size 7168). Under this configuration, \cite{mozaffari2024slope} report a 1.53x speedup as well as a 0.63x memory reduction compared to dense model. Results are reported in \cref{tab:opt30b-comparison}.

For unstructured sparsity configurations, we benchmark \ourmethod against prior (S + LR) methods on a "less aggressive" (50\% + 128) compression for both Llama3.2-1B and Llama3-8B models. \cref{tab:unstr_results} shows that our proposed method maintains its advantage even in this near-lossless configuration regime. We further evaluate 3BASiL under high sparsity ratios with (Unstructured + 64) configurations and demonstrate how our method integrates with the sparsity allocation method OWL \citep{yin2024outlier} for the Llama3-8B model---see \cref{tab:owl_comparison} in \cref{sect:appendix-exp-details}.

These results highlight the effectiveness and flexibility of our method \ourmethod.

\begin{table}[htbp]
\vspace{-4mm}
\centering
\begin{minipage}[t]{0.44\textwidth}
\captionof{figure}{One-shot C4 perplexity analysis of Llama3-8B under different (N:M + 64LR) configurations.}
\centering
\vspace{1.0em}
\includegraphics[width=\linewidth]{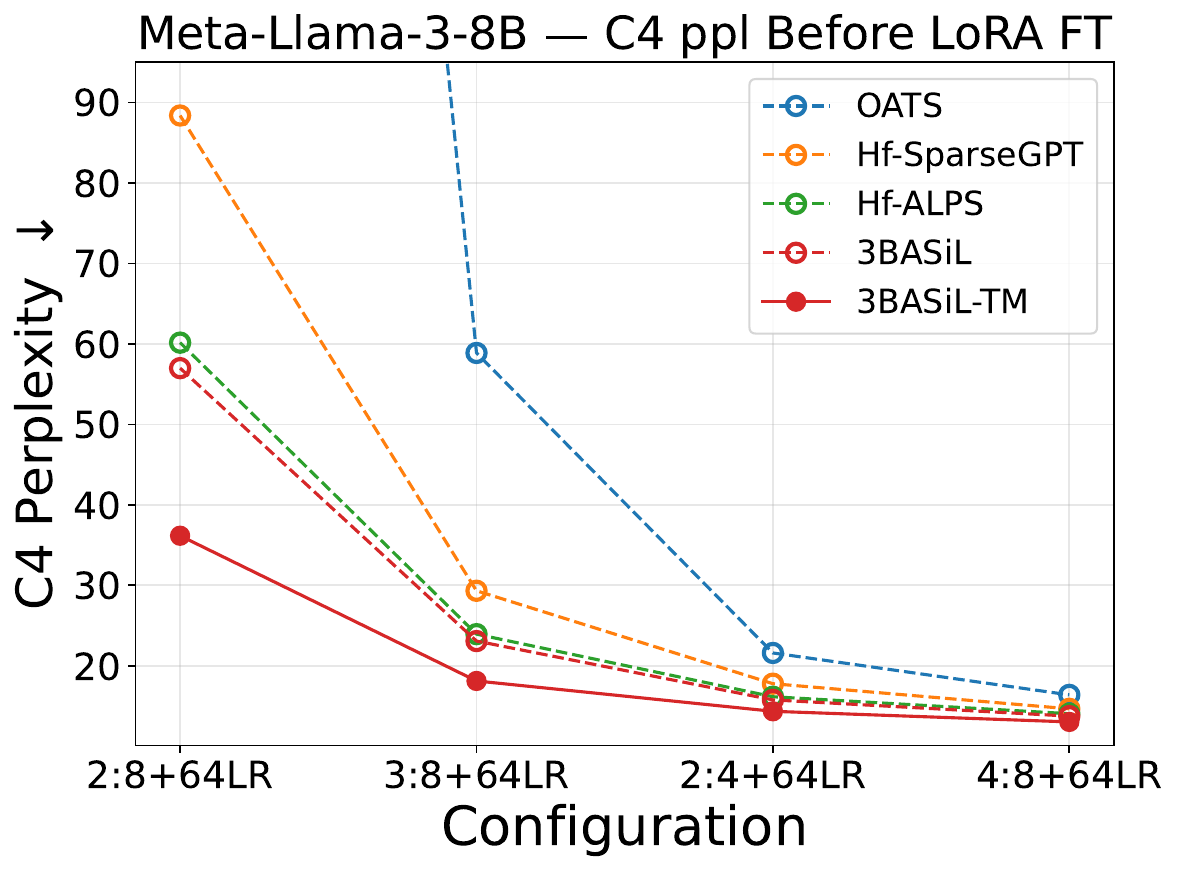}
\label{fig:c4-before-lft}
\end{minipage}%
\hfill
\begin{minipage}[t]{0.53\textwidth}
\vspace{0pt}
\centering
\captionof{table}{Perplexity of Llama-3.2 family}
\label{tab:perplexity-comparison}
\resizebox{\textwidth}{!}{%
\begin{tabular}{l|c|ccc|ccc}
\toprule
\multirow{2}{*}{\textbf{Method}} & \multirow{2}{*}{\textbf{Config}} & \multicolumn{3}{c|}{\textbf{Llama-3.2-1B}} & \multicolumn{3}{c}{\textbf{Llama-3.2-3B}} \\
\cmidrule{3-8}
 &  & \textbf{C4 $\downarrow$} & \textbf{WT2 $\downarrow$} & \textbf{PTB $\downarrow$} & \textbf{C4 $\downarrow$} & \textbf{WT2 $\downarrow$} & \textbf{PTB $\downarrow$} \\
\midrule
\texttt{OATS} & \multirow{5}{*}{2:8+64LR} & 640.86 & 605.20 & 779.86 & 531.47 & 494.31 & 674.71 \\
\texttt{Hf-SparseGPT} &  & 162.45 & 134.21 & 170.12 & 106.07 & 106.17 & 151.92 \\
\texttt{Hf-ALPS} &  & 107.14 & 94.71 & 124.17 & 69.96 & 65.34 & 108.68 \\
\cellcolor{darkgreen!10}\texttt{3BASiL} &  & \cellcolor{darkgreen!10}97.50 & \cellcolor{darkgreen!10}86.59 & \cellcolor{darkgreen!10}100.35 & \cellcolor{darkgreen!10}73.00 & \cellcolor{darkgreen!10}72.26 & \cellcolor{darkgreen!10}110.10 \\
\cellcolor{darkgreen!10}\texttt{3BASiL-TM} &  & \cellcolor{darkgreen!10}\textbf{55.24} & \cellcolor{darkgreen!10}\textbf{49.74} & \cellcolor{darkgreen!10}\textbf{69.49} & \cellcolor{darkgreen!10}\textbf{45.35} & \cellcolor{darkgreen!10}\textbf{42.38} & \cellcolor{darkgreen!10}\textbf{68.29} \\
\midrule
\texttt{OATS} & \multirow{5}{*}{3:8+64LR} & 125.91 & 92.13 & 115.80 & 65.08 & 47.27 & 81.29 \\
\texttt{Hf-SparseGPT} &  & 43.50 & 34.18 & 51.16 & 34.66 & 26.60 & 39.76 \\
\texttt{Hf-ALPS} &  & 37.80 & 29.00 & 43.60 & 27.94 & 22.77 & 34.59 \\
\cellcolor{darkgreen!10}\texttt{3BASiL} &  & \cellcolor{darkgreen!10}34.81 & \cellcolor{darkgreen!10}26.96 & \cellcolor{darkgreen!10}41.55 & \cellcolor{darkgreen!10}26.35 & \cellcolor{darkgreen!10}20.66 & \cellcolor{darkgreen!10}31.77 \\
\cellcolor{darkgreen!10}\texttt{3BASiL-TM} &  & \cellcolor{darkgreen!10}\textbf{26.26} & \cellcolor{darkgreen!10}\textbf{20.75} & \cellcolor{darkgreen!10}\textbf{32.09} & \cellcolor{darkgreen!10}\textbf{20.89} & \cellcolor{darkgreen!10}\textbf{17.18} & \cellcolor{darkgreen!10}\textbf{25.31} \\
\midrule
\texttt{OATS} & \multirow{5}{*}{4:8+64LR} & 28.06 & 19.69 & 32.90 & 19.25 & 13.40 & 21.67 \\
\texttt{Hf-SparseGPT} &  & 22.24 & 15.90 & 27.35 & 17.09 & 12.30 & 19.19 \\
\texttt{Hf-ALPS} &  & 20.71 & 14.90 & 24.75 & 16.04 & 11.51 & 18.17 \\
\cellcolor{darkgreen!10}\texttt{3BASiL} &  & \cellcolor{darkgreen!10}20.04 & \cellcolor{darkgreen!10}14.26 & \cellcolor{darkgreen!10}24.27 & \cellcolor{darkgreen!10}15.65 & \cellcolor{darkgreen!10}10.97 & \cellcolor{darkgreen!10}17.39 \\
\cellcolor{darkgreen!10}\texttt{3BASiL-TM} &  & \cellcolor{darkgreen!10}\textbf{18.66} & \cellcolor{darkgreen!10}\textbf{13.19} & \cellcolor{darkgreen!10}\textbf{22.46} & \cellcolor{darkgreen!10}\textbf{14.89} & \cellcolor{darkgreen!10}\textbf{10.29} & \cellcolor{darkgreen!10}\textbf{16.52} \\
\midrule
\texttt{OATS} & \multirow{5}{*}{2:4+64LR} & 41.80 & 28.45 & 45.36 & 25.18 & 17.41 & 28.60 \\
\texttt{Hf-SparseGPT} &  & 27.25 & 19.45 & 32.63 & 20.38 & 15.03 & 23.23 \\
\texttt{Hf-ALPS} &  & 23.90 & 17.66 & 28.96 & 18.45 & 13.79 & 20.50 \\
\cellcolor{darkgreen!10}\texttt{3BASiL} &  & \cellcolor{darkgreen!10}23.16 & \cellcolor{darkgreen!10}17.27 & \cellcolor{darkgreen!10}27.77 & \cellcolor{darkgreen!10}17.89 & \cellcolor{darkgreen!10}13.12 & \cellcolor{darkgreen!10}20.10 \\
\cellcolor{darkgreen!10}\texttt{3BASiL-TM} &  & \cellcolor{darkgreen!10}\textbf{20.46} & \cellcolor{darkgreen!10}\textbf{15.23} & \cellcolor{darkgreen!10}\textbf{24.60} & \cellcolor{darkgreen!10}\textbf{16.37} & \cellcolor{darkgreen!10}\textbf{11.79} & \cellcolor{darkgreen!10}\textbf{18.34} \\
\midrule
\rowcolor{NavyBlue!10}\textbf{Dense} & -- & 14.01 & 9.75 & 17.59 & 11.33 & 7.81 & 13.53 \\
\bottomrule
\end{tabular}%
}
\end{minipage}
\end{table}

\begin{table}
\vspace{-4mm}
\centering
\caption{One-shot (N:M Sparse + LR) decomposition performance for Meta-Llama-3-8B.}\label{tab:oneshot-nm-64}
\resizebox{\textwidth}{!}{%
\begin{tabular}{l|c|ccc|ccccccccc}
\toprule
\textbf{Method} & \textbf{Config} & \textbf{C4 $\downarrow$} & \textbf{WT2 $\downarrow$} & \textbf{PTB $\downarrow$} & \textbf{PIQA $\uparrow$} & \textbf{HS $\uparrow$} & \textbf{ARC-E $\uparrow$} & \textbf{ARC-C $\uparrow$} & \textbf{WG $\uparrow$} & \textbf{RTE $\uparrow$} & \textbf{OQA $\uparrow$} & \textbf{BoolQ $\uparrow$} & \textbf{Avg $\uparrow$}  \\
\midrule
\texttt{OATS} & \multirow{5}{*}{3:8+64LR} & 58.88 & 40.76 & 67.35 & 63.71 & 39.48 & 42.68 & 24.32 & 53.91 & 52.71 & 28.40 & 63.98 & 46.15 \\
\texttt{Hassle-free-SparseGPT} &  & 29.32 & 21.46 & 32.06 & 68.66 & 51.99 & 50.97 & 30.38 & 63.85 & 53.07 & 32.00 & 71.31 & 52.78 \\
\texttt{Hassle-free-ALPS} &  & 23.93 & 18.20 & 26.31 & 70.62 & 56.54 & 54.42 & 30.12 & 64.72 & 55.23 & 32.80 & 71.96 & 54.55 \\
\cellcolor{darkgreen!10}\texttt{\ourmethod} &  & \cellcolor{darkgreen!10}23.07 & \cellcolor{darkgreen!10}18.03 & \cellcolor{darkgreen!10}24.84 & \cellcolor{darkgreen!10}71.06 & \cellcolor{darkgreen!10}56.96 & \cellcolor{darkgreen!10}57.70 & \cellcolor{darkgreen!10}32.59 & \cellcolor{darkgreen!10}66.69 & \cellcolor{darkgreen!10}54.51 & \cellcolor{darkgreen!10}33.00 & \cellcolor{darkgreen!10}66.70 & \cellcolor{darkgreen!10}54.90 \\
\cellcolor{darkgreen!10}\texttt{\ourmethod-TM} &  & \cellcolor{darkgreen!10}\textbf{18.11} & \cellcolor{darkgreen!10}\textbf{14.26} & \cellcolor{darkgreen!10}\textbf{20.47} & \cellcolor{darkgreen!10}\textbf{74.05} & \cellcolor{darkgreen!10}\textbf{61.85} & \cellcolor{darkgreen!10}\textbf{60.73} & \cellcolor{darkgreen!10}\textbf{34.73} & \cellcolor{darkgreen!10}\textbf{65.98} & \cellcolor{darkgreen!10}\textbf{54.51} & \cellcolor{darkgreen!10}\textbf{34.80} & \cellcolor{darkgreen!10}\textbf{76.91} & \cellcolor{darkgreen!10}\textbf{57.94} \\
\midrule
\texttt{OATS} & \multirow{5}{*}{4:8+64LR} & 16.38 & 10.88 & 17.23 & 75.84 & 67.60 & 67.09 & 41.21 & 70.88 & 60.29 & 38.20 & 73.61 & 61.84 \\
\texttt{Hassle-free-SparseGPT} &  & 14.65 & 9.88 & 15.21 & 77.09 & 69.95 & 69.32 & 41.81 & 71.27 & 56.32 & 40.60 & 79.39 & 63.22 \\
\texttt{Hassle-free-ALPS} &  & 14.04 & 9.44 & 14.45 & 76.82 & 71.19 & 71.04 & 44.45 & \textbf{72.77} & 56.68 & 40.20 & 78.13 & 63.91 \\
\cellcolor{darkgreen!10}\texttt{\ourmethod} &  & \cellcolor{darkgreen!10}13.74 & \cellcolor{darkgreen!10}9.21 & \cellcolor{darkgreen!10}14.24 & \cellcolor{darkgreen!10}76.88 & \cellcolor{darkgreen!10}72.05 & \cellcolor{darkgreen!10}70.16 & \cellcolor{darkgreen!10}44.80 & \cellcolor{darkgreen!10}72.14 & \cellcolor{darkgreen!10}61.01 & \cellcolor{darkgreen!10}41.40 & \cellcolor{darkgreen!10}80.89 & \cellcolor{darkgreen!10}64.92 \\
\cellcolor{darkgreen!10}\texttt{\ourmethod-TM} &  & \cellcolor{darkgreen!10}\textbf{13.02} & \cellcolor{darkgreen!10}\textbf{8.64} & \cellcolor{darkgreen!10}\textbf{13.70} & \cellcolor{darkgreen!10}\textbf{78.24} & \cellcolor{darkgreen!10}\textbf{72.59} & \cellcolor{darkgreen!10}\textbf{73.11} & \cellcolor{darkgreen!10}\textbf{47.35} & \cellcolor{darkgreen!10}71.98 & \cellcolor{darkgreen!10}\textbf{63.18} & \cellcolor{darkgreen!10}\textbf{42.40} & \cellcolor{darkgreen!10}\textbf{80.49} & \cellcolor{darkgreen!10}\textbf{66.17} \\
\midrule
\texttt{OATS} & \multirow{5}{*}{2:4+64LR} & 21.59 & 14.76 & 23.41 & 72.74 & 60.70 & 60.86 & 34.81 & 65.51 & 57.76 & 35.20 & 68.32 & 56.99 \\
\texttt{Hassle-free-SparseGPT} &  & 17.77 & 12.38 & 18.71 & 74.81 & 65.04 & 66.16 & 38.57 & 70.09 & 54.87 & 38.40 & 77.71 & 60.71 \\
\texttt{Hassle-free-ALPS} &  & 16.15 & 11.38 & 16.71 & 75.19 & 67.10 & 64.44 & 38.91 & 69.53 & 59.93 & 39.40 & 78.38 & 61.61 \\
\cellcolor{darkgreen!10}\texttt{\ourmethod} &  & \cellcolor{darkgreen!10}15.76 & \cellcolor{darkgreen!10}11.23 & \cellcolor{darkgreen!10}16.25 & \cellcolor{darkgreen!10}76.50 & \cellcolor{darkgreen!10}67.61 & \cellcolor{darkgreen!10}67.21 & \cellcolor{darkgreen!10}40.10 & \cellcolor{darkgreen!10}70.24 & \cellcolor{darkgreen!10}64.26 & \cellcolor{darkgreen!10}38.20 & \cellcolor{darkgreen!10}78.29 & \cellcolor{darkgreen!10}62.80 \\
\cellcolor{darkgreen!10}\texttt{\ourmethod-TM} &  & \cellcolor{darkgreen!10}\textbf{14.34} & \cellcolor{darkgreen!10}\textbf{9.78} & \cellcolor{darkgreen!10}\textbf{14.88} & \cellcolor{darkgreen!10}\textbf{77.48} & \cellcolor{darkgreen!10}\textbf{69.58} & \cellcolor{darkgreen!10}\textbf{67.21} & \cellcolor{darkgreen!10}\textbf{40.53} & \cellcolor{darkgreen!10}\textbf{71.27} & \cellcolor{darkgreen!10}\textbf{61.37} & \cellcolor{darkgreen!10}\textbf{39.80} & \cellcolor{darkgreen!10}\textbf{79.51} & \cellcolor{darkgreen!10}\textbf{63.34} \\
\midrule
\midrule[1pt]
\rowcolor{NavyBlue!10}\textbf{{Meta-Llama-3-8B Dense}} & -- & 9.44 & 6.14 & 11.18 & 80.79 & 79.17 & 77.69 & 53.33 & 72.85 & 69.68 & 45.00 & 81.44 & 69.99 \\
\bottomrule
\end{tabular}%
}
\end{table}

\begin{table}[htbp]
\centering
\begin{minipage}[c]{0.63\textwidth}
    \resizebox{\textwidth}{!}{%
    \begin{tabular}{l|c|ccc|ccc}
    \toprule
    \textbf{Method} & \textbf{Config} & \textbf{C4 $\downarrow$} & \textbf{WT2 $\downarrow$} & \textbf{PTB $\downarrow$} & \textbf{PIQA $\uparrow$} & \textbf{ARC-E $\uparrow$} & \textbf{ARC-C $\uparrow$} \\
    \midrule
    Wanda &  & 38.21 & 26.89 & 47.13 & 67.63 & 49.37 & 29.01 \\
    \cellcolor{darkred!10}Wanda-TM &  & \cellcolor{darkred!10}15.91 & \cellcolor{darkred!10}11.03 & \cellcolor{darkred!10}17.60 & \cellcolor{darkred!10}75.14 & \cellcolor{darkred!10}63.97 & \cellcolor{darkred!10}40.19 \\
    SparseGPT &  & 22.65 & 16.22 & 25.15 & 71.16 & 56.48 & 32.59 \\
    \cellcolor{darkred!10}SparseGPT-TM &  & \cellcolor{darkred!10}15.30 & \cellcolor{darkred!10}10.83 & \cellcolor{darkred!10}16.77 & \cellcolor{darkred!10}76.28 & \cellcolor{darkred!10}\underline{65.28} & \cellcolor{darkred!10}\underline{40.36} \\
    ALPS &  & 19.62 & 14.50 & 21.73 & 73.78 & 60.06 & 35.84 \\
    \cellcolor{darkred!10}ALPS-TM & \multirow{-6}{*}{2:4} & \cellcolor{darkred!10}\underline{14.96} & \cellcolor{darkred!10}\underline{10.65} & \cellcolor{darkred!10}\underline{16.35} & \cellcolor{darkred!10}\underline{76.88} & \cellcolor{darkred!10}65.03 & \cellcolor{darkred!10}39.85 \\
    \midrule
    Wanda &  & 22.70 & 15.58 & 26.62 & 72.03 & 58.63 & 36.09 \\
    \cellcolor{darkred!10}Wanda-TM &  & \cellcolor{darkred!10}13.99 & \cellcolor{darkred!10}9.28 & \cellcolor{darkred!10}15.09 & \cellcolor{darkred!10}77.48 & \cellcolor{darkred!10}68.22 & \cellcolor{darkred!10}42.75 \\
    SparseGPT &  & 17.59 & 12.29 & 18.48 & 75.68 & 63.38 & 39.59 \\
    \cellcolor{darkred!10}SparseGPT-TM &  & \cellcolor{darkred!10}13.68 & \cellcolor{darkred!10}9.28 & \cellcolor{darkred!10}14.51 & \cellcolor{darkred!10}\underline{78.07} & \cellcolor{darkred!10}\underline{70.29} & \cellcolor{darkred!10}\underline{43.94} \\
    ALPS &  & 16.06 & 11.17 & 16.60 & 76.12 & 66.25 & 40.87 \\
    \cellcolor{darkred!10}ALPS-TM & \multirow{-6}{*}{4:8} & \cellcolor{darkred!10}\underline{13.59} & \cellcolor{darkred!10}\underline{9.15} & \cellcolor{darkred!10}\underline{14.18} & \cellcolor{darkred!10}77.58 & \cellcolor{darkred!10}69.57 & \cellcolor{darkred!10}\underline{43.94} \\
    \midrule
    OATS &  & 21.03 & 14.54 & 24.15 & 73.67 & 59.68 & 37.12 \\
    Hf-SparseGPT &  & 20.05 & 15.03 & 22.01 & 74.05 & 60.52 & 36.18 \\
    Hf-ALPS &  & 17.89 & 13.07 & 19.11 & 74.54 & 65.53 & 39.08 \\
    \cellcolor{darkgreen!10}3BASiL & & \cellcolor{darkgreen!10}15.20 & \cellcolor{darkgreen!10}10.64 & \cellcolor{darkgreen!10}15.80 & \cellcolor{darkgreen!10}76.71 & \cellcolor{darkgreen!10}70.08 & \cellcolor{darkgreen!10}43.52 \\
    \cellcolor{darkgreen!10}3BASiL-TM & \multirow{-5}{*}{2:8+LR} & \cellcolor{darkgreen!10}13.81 & \cellcolor{darkgreen!10}9.50 & \cellcolor{darkgreen!10}14.74 & \cellcolor{darkgreen!10}77.15 & \cellcolor{darkgreen!10}73.36 & \cellcolor{darkgreen!10}44.54 \\
    \midrule
    OATS &  & 16.87 & 11.43 & 18.53 & 75.24 & 65.91 & 39.85 \\
    Hf-SparseGPT &  & 16.16 & 11.36 & 16.71 & 75.79 & 67.55 & 41.04 \\
    Hf-ALPS &  & 14.85 & 10.20 & 15.42 & 77.15 & 69.40 & 43.64 \\
    \cellcolor{darkgreen!10}3BASiL & & \cellcolor{darkgreen!10}13.73 & \cellcolor{darkgreen!10}9.29 & \cellcolor{darkgreen!10}14.62 & \cellcolor{darkgreen!10}\textbf{78.45} & \cellcolor{darkgreen!10}71.42 & \cellcolor{darkgreen!10}43.43 \\
    \cellcolor{darkgreen!10}3BASiL-TM & \multirow{-5}{*}{3:8+LR} & \cellcolor{darkgreen!10}\textbf{13.01} & \cellcolor{darkgreen!10}\textbf{8.69} & \cellcolor{darkgreen!10}\textbf{13.74} & \cellcolor{darkgreen!10}77.80 & \cellcolor{darkgreen!10}\textbf{75.00} & \cellcolor{darkgreen!10}\textbf{47.44} \\
    \midrule
    \rowcolor{NavyBlue!10} \textbf{Llama3-8B} Dense& -- & 9.44 & 6.14 & 11.18 & 80.79 & 77.69 & 53.33 \\
    \bottomrule
    \end{tabular}%
    }
\end{minipage}%
\hfill
\begin{minipage}[c]{0.35\textwidth}
\caption{One-shot (N:M Sparse + LR) decomposition performance of Llama3-8B model. The compression ratio (percentage of non-zero parameters retained) is fixed to be $\rho=0.5$. For Perplexity, $(\downarrow)$ lower values are preferred. For zero-shot tasks, $(\uparrow)$ higher values are preferred. Bolded values correspond to the overall best compression scheme that satisfies $\rho=0.5$. Underlined values correspond to the best pure pruning algorithm for the same compression. This shows the universality of \texttt{transformer-matching} to pure sparsity constraints.}
\label{tab:compression-rho}
\end{minipage}
\end{table}

\begin{table}[htbp]
\centering
\begin{minipage}[c]{0.49\textwidth}
\resizebox{0.99\textwidth}{!}{%
\begin{tabular}{l|ccc|c}
\toprule
\textbf{Method} & \textbf{C4 $\downarrow$} & \textbf{WT2 $\downarrow$} & \textbf{PTB $\downarrow$} & \textbf{Time (hrs) $\downarrow$} \\
\midrule
\texttt{OATS-10} & 11.75 & 10.48 & 14.65 & 5.81 \\
\texttt{Hf-SparseGPT} & 11.58 & 10.17 & 14.39 & 5.97 \\
\texttt{Hf-ALPS-10} & 11.56 & 10.05 & 14.33 & 4.33 \\
\cellcolor{darkgreen!10}\texttt{3BASiL} & \cellcolor{darkgreen!10}\textbf{11.53} & \cellcolor{darkgreen!10}\textbf{10.04} & \cellcolor{darkgreen!10}\textbf{14.26} & \cellcolor{darkgreen!10}\textbf{4.20} \\
\midrule
\rowcolor{NavyBlue!10}\textbf{Dense} & 11.44 & 9.56 & 14.04 & -- \\
\bottomrule
\end{tabular}%
}
\end{minipage}
\hfill
\begin{minipage}[c]{0.49\textwidth}
\caption{One-shot (2:4 + 112) decomposition of OPT-30B model. This configuration results in efficient inference. We limit the compression runtime to 6 A100 GPU hours. \texttt{3BASiL-TM} largely exceeds this period. We limit the alternating minimization steps of \texttt{Hf-ALPS} and \texttt{OATS} to 10 to fit within the time constraint.}
\label{tab:opt30b-comparison}
\end{minipage}
\end{table}

\begin{table}[htbp]
\vspace{-4mm}
\centering
\caption{One-shot $(50\% + 128)$ decomposition for Llama3.2-1B and Meta-Llama-3-8B models.}\label{tab:unstr_results}
\resizebox{\textwidth}{!}{%
\begin{tabular}{l|c|ccc|ccccccccc}
\toprule
\textbf{Method} & \textbf{Config} & \textbf{C4 $\downarrow$} & \textbf{WT2 $\downarrow$} & \textbf{PTB $\downarrow$} & \textbf{PIQA $\uparrow$} & \textbf{HS $\uparrow$} & \textbf{ARC-E $\uparrow$} & \textbf{ARC-C $\uparrow$} & \textbf{WG $\uparrow$} & \textbf{RTE $\uparrow$} & \textbf{OQA $\uparrow$} & \textbf{BoolQ $\uparrow$} & \textbf{Avg $\uparrow$} \\
\midrule
OATS &  & 17.99 & 12.16 & 21.40 & 71.71 & 57.96 & 57.28 & 33.79 & 59.98 & 52.71 & 33.00 & 63.94 & 53.80 \\
Hf-SparseGPT &  & 17.25 & 11.99 & 20.87 & 72.91 & 59.04 & 56.82 & 33.11 & 58.88 & 57.76 & 35.00 & 57.03 & 53.82 \\
Hf-ALPS &  & 16.81 & 11.66 & 20.12 & 72.80 & 59.92 & 57.62 & 33.11 & 58.64 & 55.96 & 35.20 & 59.66 & 54.11 \\
\cellcolor{darkgreen!10}3BASiL &  & \cellcolor{darkgreen!10}16.17 & \cellcolor{darkgreen!10}11.16 & \cellcolor{darkgreen!10}20.00 & \cellcolor{darkgreen!10}\textbf{73.83} & \cellcolor{darkgreen!10}60.42 & \cellcolor{darkgreen!10}58.04 & \cellcolor{darkgreen!10}34.47 & \cellcolor{darkgreen!10}60.38 & \cellcolor{darkgreen!10}53.79 & \cellcolor{darkgreen!10}\textbf{36.80} & \cellcolor{darkgreen!10}58.20 & \cellcolor{darkgreen!10}54.49 \\
\cellcolor{darkgreen!10}3BASiL-TM & \multirow{-5}{*}{50\%+128} & \cellcolor{darkgreen!10}\textbf{15.78} & \cellcolor{darkgreen!10}\textbf{10.87} & \cellcolor{darkgreen!10}\textbf{19.33} & \cellcolor{darkgreen!10}73.23 & \cellcolor{darkgreen!10}\textbf{60.66} & \cellcolor{darkgreen!10}\textbf{59.26} & \cellcolor{darkgreen!10}\textbf{34.56} & \cellcolor{darkgreen!10}\textbf{61.01} & \cellcolor{darkgreen!10}\textbf{59.21} & \cellcolor{darkgreen!10}36.60 & \cellcolor{darkgreen!10}\textbf{64.13} & \cellcolor{darkgreen!10}\textbf{56.08} \\
\midrule
\rowcolor{NavyBlue!10} \textbf{Llama-3.2-1B Dense} & -- & 14.01 & 9.75 & 17.59 & 74.59 & 63.66 & 60.48 & 36.26 & 60.69 & 56.68 & 37.20 & 63.98 & 56.69 \\
\midrule[1pt]
OATS &  & 12.25 & 7.78 & 12.92 & 78.40 & 75.32 & 73.99 & 49.15 & \textbf{73.80} & 58.84 & 41.80 & 79.42 & 66.34 \\
Hf-SparseGPT &  & 11.98 & 7.77 & 12.85 & 79.11 & 75.88 & 75.00 & 49.40 & 73.32 & 63.18 & 43.80 & 78.32 & 67.25 \\
Hf-ALPS &  &  12.09 & 7.99 & 12.86 & 78.78 & 76.29 & \textbf{76.52} & \textbf{51.19} & 73.09 & 60.65 & 40.20 & \textbf{81.62} & 67.29\\
\cellcolor{darkgreen!10}3BASiL &  & \cellcolor{darkgreen!10}11.51 & \cellcolor{darkgreen!10}7.47 & \cellcolor{darkgreen!10}12.36 & \cellcolor{darkgreen!10}79.54 & \cellcolor{darkgreen!10}\textbf{76.69} & \cellcolor{darkgreen!10}74.75 & \cellcolor{darkgreen!10}48.72 & \cellcolor{darkgreen!10}72.69 & \cellcolor{darkgreen!10}67.87 & \cellcolor{darkgreen!10}43.00 & \cellcolor{darkgreen!10}80.24 & \cellcolor{darkgreen!10}67.94 \\
\cellcolor{darkgreen!10}3BASiL-TM & \multirow{-5}{*}{50\%+128} & \cellcolor{darkgreen!10}\textbf{11.27} & \cellcolor{darkgreen!10}\textbf{7.30} & \cellcolor{darkgreen!10}\textbf{12.26} & \cellcolor{darkgreen!10}\textbf{79.65} & \cellcolor{darkgreen!10}76.07 & \cellcolor{darkgreen!10}75.84 & \cellcolor{darkgreen!10}47.78 & \cellcolor{darkgreen!10}71.98 & \cellcolor{darkgreen!10}\textbf{70.40} & \cellcolor{darkgreen!10}\textbf{44.20} & \cellcolor{darkgreen!10}80.70 & \cellcolor{darkgreen!10}\textbf{68.33} \\
\midrule
\rowcolor{NavyBlue!10} \textbf{Meta-Llama-3-8B Dense} & -- & 9.44 & 6.14 & 11.18 & 80.79 & 79.17 & 77.69 & 53.33 & 72.85 & 69.68 & 45.00 & 81.44 & 69.99 \\
\bottomrule
\end{tabular}%
}
\end{table}

\noindent\textbf{LoRA fine-tuning after one-shot compression}~~~
After applying \slr decomposition, the resulting low-rank components can serve as initialization for LoRA fine-tuning on downstream tasks to recover lost performance. We conducted limited LoRA fine-tuning on 10\% of the first C4 training dataset shard (approximately 15 million tokens), with detailed hyperparameters in \cref{sect:appendix-exp-details}. \cref{fig:c4ppl-lft} demonstrates that \texttt{LFT-3BASiL-TM} significantly reduces the C4 perplexity of \slr decompositions, particularly under aggressive compression regimes like 2:8+64LR. Moreover, while LoRA fine-tuning can recover a large portion of the performance lost due to compression, an advanced one-shot decomposition approach retains its advantage post fine-tuning. For instance,  \texttt{LFT-3BASiL-TM} still outperforms competing decomposition methods after LoRA fine-tuning of 2:8+64LR configurations, achieving approximately 8\% lower perplexity.

\begin{figure}[h]
    \centering
    \begin{subfigure}[b]{0.43\textwidth}
        \centering
        \includegraphics[width=\linewidth]{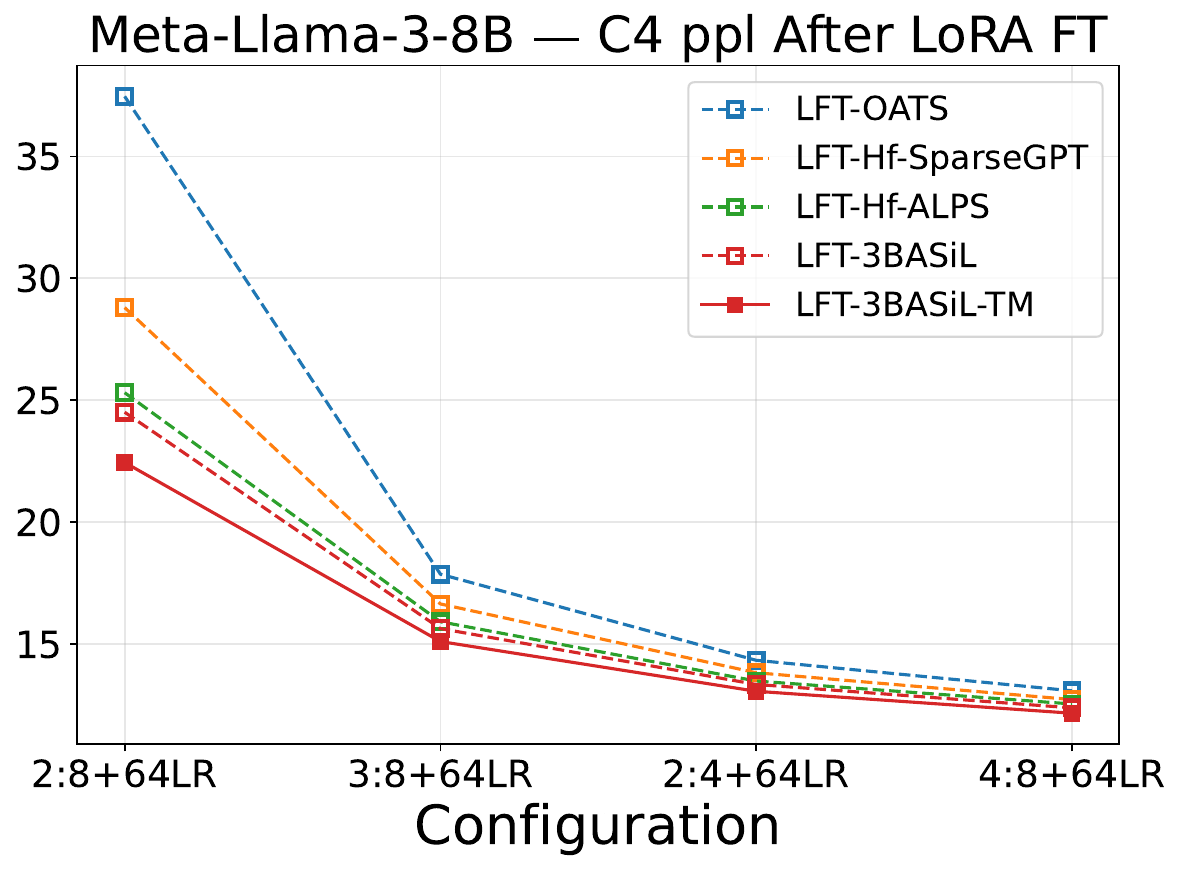}
        \vspace{0.2em}
        \caption{C4 ppl of Llama3-8B model under different \slr configurations after LoRA.}
        % \vspace{-1.0em}
        \label{fig:c4ppl-lft}
    \end{subfigure}
    \hfill
    \begin{subfigure}[b]{0.55\textwidth}
        \centering
        \includegraphics[width=\linewidth]{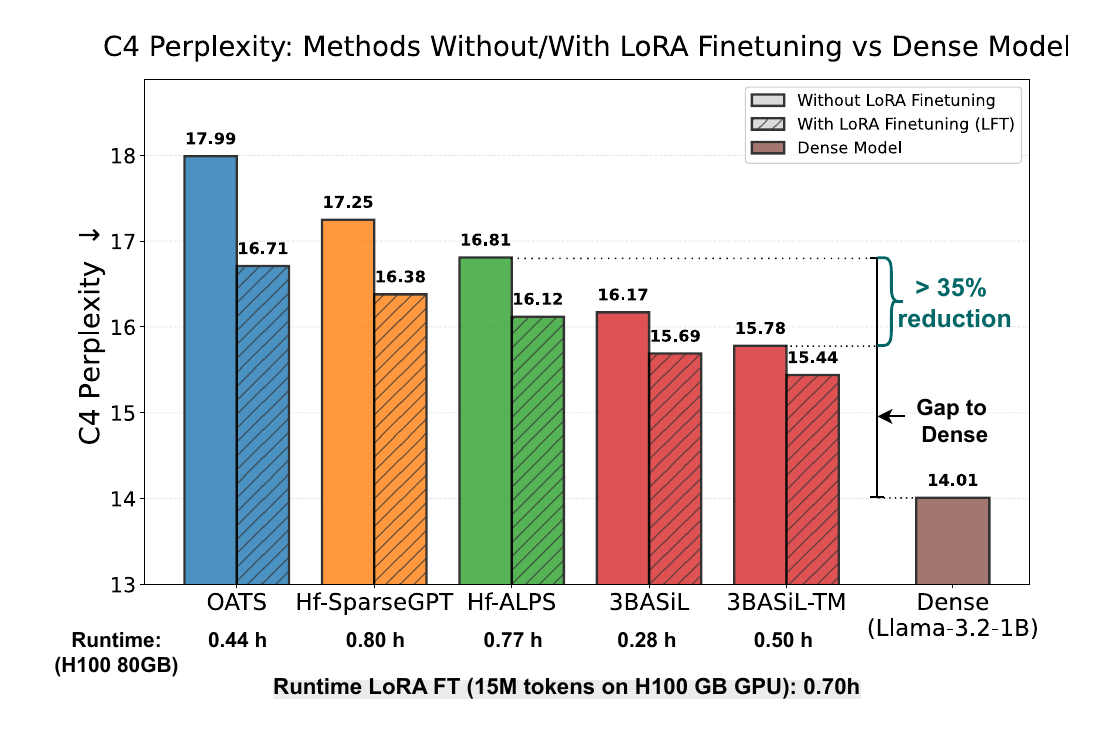}
        \caption{C4 perplexity gap to dense model (Llama3.2-1B) under (50\%+128LR) configuration.}
        \label{fig:nearlossless-1b}
    \end{subfigure}
    \caption{C4 perplexity performance of Llama3-8B \& Llama3.2-1B before/after LoRA fine-tuning.}
    \label{fig:combined}
\end{figure}

\vspace{-0.5em}
\section{Related Work}
\label{sec:related-work}
\vspace{-0.5em}
% \noindent \textbf{Robust PCA and Sparse plus Low-Rank Decomposition.}
% The decomposition of matrices into sparse and low-rank components has foundational roots in the \textit{Robust Principal Component Analysis} (RPCA) literature~\cite{candes2011robust, zhou2010stable}. In RPCA, one seeks to recover a matrix $\mathbf{M}$ as $\mathbf{M} = \mathbf{L} + \mathbf{S}$, where $\mathbf{L}$ is low-rank and $\mathbf{S}$ is sparse. Theoretical guarantees on exact recovery under incoherence and sparsity assumptions have made RPCA influential in statistics, signal processing, and machine learning.
\noindent \textbf{One-shot Sparse/Quantized plus Low-Rank compression}~~~ The seminal works of \cite{yu2017compressing} proposed a compression technique for a neural network using sparsity plus low-rank constraints. However, the authors study small-scale vision models. In addition, they consider a compression that needs to be repeated over multiple rounds (decomposing selected layers and followed by a retraining process). Our focus is different; we are interested in compressing at LLM-scale in one-shot (no expensive retraining).
Recent methods in LLM compression have focused on effectively combining low-rank decomposition with quantization or sparsity. EoRA \citep{DBLP:journals/corr/abs-2410-21271} has been proposed as a method to compensate for the loss produced by a general-purpose compressed weight $\mathcal{C}(\bfW)$ using a low-rank component, it does the low-rank fitting step once post the initial weight compression, which could include combinations of sparsity and quantization. LoftQ~\citep{li2023loftq} jointly optimizes quantization and LoRA initialization by solving $\min_{\mathbf{Q}, \mathbf{L}}\|\mathbf{W} - (\mathbf{Q} + \mathbf{L})\|_F$, where $\mathbf{W}$ represents the original weights, $\mathbf{Q}$ the quantized component, and $\mathbf{L}$ the low-rank component. LQ-LoRA~\citep{guo2023lq} extends this by incorporating Fisher information weighting, approximately solving $\min_{\mathbf{Q}, \mathbf{L}}\|\mathbf{F} \odot (\mathbf{W} - (\mathbf{Q} + \mathbf{L}))\|_F$. CALDERA~\citep{saha2024compressing} further considers the layer-wise reconstruction error, optimizing $\min_{\mathbf{Q}, \mathbf{L}}\|\mathbf{X}\mathbf{W} - \mathbf{X}(\mathbf{Q} + \mathbf{L})\|_F$ to maintain the outputs of individual layers rather than mere weight approximation.
From the \slr perspective, OATS~\citep{zhang2024oats} proposes an outlier-aware alternating minimization, effectively reducing to solving $\min_{\mathbf{S}, \mathbf{L}}\|\mathbf{D}\mathbf{W} - \mathbf{D}(\mathbf{S} + \mathbf{L})\|_F$ with $\mathbf{D} = \text{diag}(\mathbf{X}^T\mathbf{X})$, as noted by \cite{makni2025hasslefree}. HASSLE-free~\citep{makni2025hasslefree} directly tackles layer-wise reconstruction error $\min_{\mathbf{S}, \mathbf{L}}\|\mathbf{X}\mathbf{W} - \mathbf{X}(\mathbf{S} + \mathbf{L})\|_F$ using alternating minimization. 
While methods such as OATS and HASSLE-free separately optimize sparse and low-rank components, our proposed approach, \texttt{3BASiL}, distinctly utilizes a unified optimization framework via a 3-block ADMM formulation, jointly optimizing sparse and low-rank components simultaneously.

\noindent \textbf{Sparse plus Low-Rank structures in transformers}~~~
Beyond \textit{model compression}, sparse plus low-rank structures have a strong presence in the context of LLMs. LoRAPrune~\citep{zhang2024loraprune} is a purse sparsification method, which prunes a model (iteratively) by designing a memory-efficient LoRA-guided (low-rank structure) pruning criterion. In contrast, LoSA (low-rank Sparse Adaptation) \citep{huangdynamic} jointly applies LoRA fine-tuning and pruning in a unified framework to obtain a fine-tuned sparse-only (as opposed to \slr) model, by dynamically sparsifying the LoRA weights and adjusting their rank. SLTrain \citep{hansltrain} addresses \slr from a training perspective. It pre-trains an LLM using a fixed random sparse mask plus trainable low-rank factors (similar to LoRA), achieving comparable accuracy to dense training with far fewer parameters. SLTrain demonstrates the benefits of \slr structure for pre-training but it doesn't solve the post-hoc decomposition problem of a dense model. There are connections between our \textit{transformer-matching} step and SLTrain as they both train sparse (fixed support) and low-rank components, but they minimize different loss functions and serve different purposes. 

\noindent \textbf{ADMM approaches to compress networks}~~~ 
The Alternating Direction Method of Multipliers (ADMM) \citep{boyd2011distributed, davis2016convergence} is an effective optimization technique for problems with coupled variables that has been successfully applied to neural network compression. \cite{ye2018progressive} introduced ADMM-based progressive weight pruning that optimizes the original loss function under sparsity constraints, which \cite{ye2019adversarial} extended to preserve adversarial robustness during compression. In contrast, recent methods have scaled ADMM to LLMs through layer-wise reconstruction: \cite{bovza2024fast} employed ADMM to solve a convex problem recovering optimal weights on a fixed support of the weight matrix, while \cite{meng2024alps} utilized ADMM for a non-convex problem that jointly optimizes both support and weights. Our proposed method differs from these prior works as we explore a 3-block ADMM in model compression that simultaneously optimizes \slr components with theoretical convergence guarantees.

\noindent \textbf{Exact Low-Rank updates for layer-wise compression}~~~
The problem of exact low-rank updates found in \cref{eq:L-update} has original roots from classical reduced-rank regression methods \citep{izenman1975reduced,reinsel1998multivariate}, which provide closed-form solutions for optimally approximating linear regression models under rank constraints. Recent work, including CALDERA \citep{saha2024compressing} and the low-rank correction method by \cite{scetbon2024low}, applies these closed-form updates to compress large language models into $\mathbf{W} \approx \mathbf{Q}+\mathbf{LR}$. 
%In particular, \cite{saha2024compressing} derive exact updates for the more challenging optimization problem $\min_{\mathbf{M}} \|\mathbf{Y} - \mathbf{XM}\|_F,\, s.t.\, \text{rank}(\mathbf{M}) \leq r.$ 
%%%Our method builds on 
We also use these exact low-rank updates by integrating them directly in \cref{eq:L-update} within our ADMM framework for \slr decomposition.

\vspace{-0.5em}
\section{Conclusion and limitations}
\label{sec:conclusion}
\vspace{-0.5em}
% !TEX root = ../main.tex

We present \ourmethod as a highly-efficient \slr decomposition algorithm with theoretical convergence guarantees. It provides high-quality solutions to the layer-wise decomposition problem presented in \cref{eq:original} in terms of objective minimization (\cref{fig:attn-layers-loss} and \cref{fig:mlp-layers-loss}) compared to competing \slr decomposition methods. We further refine these decomposed weights with our novel (memory-efficient) transformer matching step \texttt{TM} that can enhance any \slr decomposition. This shows that one route for optimal compression results (in the context of $\mathcal{C}(\mathbf{W})+\mathbf{LR}$) is to unfold the LLM compression into 3 minimization steps: (i) [layer-wise reconstruction] this is the loss that has been considered in many SOTA pruning/quantization algorithms \citep{frantar2023sparsegpt, meng2024alps, saha2024compressing,frantar2022gptq,
meng2024osscar}, (ii) [transformer-matching] this is an intermediate loss function (to be optimized in a memory-efficient manner) which is a more reliable approximation to the true loss function than simple layer-wise reconstruction, and (iii) [LoRA fine-tuning] plugs the obtained low-rank components as \textit{smart initialization} for LoRA to minimize the true LLM loss function. We believe that our 3-block ADMM approach and \texttt{TM} can generalize to quantization or quantized-sparse constraints. We leave these explorations for future works.While we have shown how to integrate sparsity allocation mechanisms like OWL to our framework, it remains to explore dedicated methods that can algorithmically allocate different sparsity/rank configurations to different layers to further improve efficiency-utility-computations tradeoffs.

\subsection*{Acknowledgements}
This research is supported in part by grants from the Office of Naval 
Research (N000142512504, N000142212665). We acknowledge the MIT Engaging cluster for providing HPC resources that have contributed to the research results reported within this paper. Additionally, we thank Google for providing us with Google Cloud Credits. 
We thank Shibal Ibrahim, Ryan Lucas, and Gabriel Afriat for their helpful discussions.

%%%%%%%%%%%%%%%%%%%%%%%%%%%%%%%%%%%%%%%%%%%%%%%%%%%%%%%%%%%%
\newpage
\bibliographystyle{plainnat}
\bibliography{ref}
%%%%%%%%%%%%%%%%%%%%%%%%%%%%%%%%%%%%%%%%%%%%%%%%%%%%%%%%%%%%
\newpage 
\appendix
\section*{Appendix}\label{sec:appendix}
\section{Proofs of Theorem \ref{thm:admm}}\label{sect:proofadmm}
% % !TEX root = ../main.tex

\begin{proof} For conciseness, throughout the proof, we denote $\mathbf{H}=\mathbf{X}^\top\mathbf{X}+\lambda\mathbf{I}$ and $\mathbf{G} = \left(\mathbf{X}^\top\mathbf{X}+\lambda\mathbf{I}\right)\widehat{\mathbf{W}}$. We denote $C_F$ as a large constant such that
\begin{equation}\label{eq:constant}
    \max\{1,\| \mathbf{H}^{-1/2} \|_2,\,\|\mathbf{H}\|_2,\,\|\mathbf{G}\|_F\}\le C_F.
\end{equation}
To establish the theorem, we first present the following three lemmas. 

\begin{lemma}\label{lemma0}
Let $\left\{\mathbf{D}^{(t)}\right\}_{t=0}^\infty$, $\left\{\mathbf{L}^{(t)}\right\}_{t=0}^\infty$ and $\left\{\mathbf{V}^{(t)}\right\}_{t=0}^\infty$ be the sequence generated according to update rule \eqref{eq:finalupdate}. Then for any $t \ge 1$, it holds 
\begin{equation}
	\|\mathbf{L}^{(t)}\|_F \le C_F^3\left(1+\|\mathbf{D}^{(t)}\|_F + \frac{\|\mathbf{V}^{(t)}\|_F}{\rho_{t-1}} + \frac{\|\mathbf{V}^{(t-1)}\|_F}{\rho_{t-1}}\right).
\end{equation}
\end{lemma}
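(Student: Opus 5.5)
Because the $\mathbf{L}$-update is an explicit formula, the plan is to bound $\|\mathbf{L}^{(t)}\|_F$ by $\|\widehat{\mathbf{W}}-\mathbf{S}^{(t)}\|_F$, and then express $\mathbf{S}^{(t)}$ through $\mathbf{D}^{(t)}$ and the dual variables using the $\mathbf{D}$- and $\mathbf{V}$-updates.

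\textbf{Step 1: reduce to $\mathbf{S}^{(t)}$.} By \eqref{eq:finalupdate}, $\mathbf{L}^{(t)} = \mathbf{H}^{-1/2} P_r(\mathbf{H}^{1/2}(\widehat{\mathbf{W}}-\mathbf{S}^{(t)}))$. The best rank-$r$ approximation keeps only the top $r$ singular values, so $\|P_r(\mathbf{M})\|_F\le \|\mathbf{M}\|_F$ for any $\mathbf{M}$. Combining this with the submultiplicative bound $\|\mathbf{A}\mathbf{B}\|_F\le\|\mathbf{A}\|_2\|\mathbf{B}\|_F$ gives
\[
\|\mathbf{L}^{(t)}\|_F \le \|\mathbf{H}^{-1/2}\|_2\,\|\mathbf{H}^{1/2}\|_2\,\|\widehat{\mathbf{W}}-\mathbf{S}^{(t)}\|_F .
\]
Since $C_F\ge 1$ and $\|\mathbf{H}\|_2\le C_F$, we have $\|\mathbf{H}^{1/2}\|_2=\|\mathbf{H}\|_2^{1/2}\le C_F$. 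Hence $\|\mathbf{L}^{(t)}\|_F\le C_F^2(\|\widehat{\mathbf{W}}\|_F+\|\mathbf{S}^{(t)}\|_F)$.

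\textbf{Step 2: bound $\widehat{\mathbf{W}}$.} Write $\widehat{\mathbf{W}}=\mathbf{H}^{-1}\mathbf{G}$. This gives $\|\widehat{\mathbf{W}}\|_F\le \|\mathbf{H}^{-1/2}\|_2^2\,\|\mathbf{G}\|_F\le C_F^3$. This is the main obstacle: a naive count produces $C_F^5$, not $C_F^3$. To avoid it, I will not split $\widehat{\mathbf{W}}$ and $\mathbf{S}^{(t)}$ apart. Instead I keep $\mathbf{H}^{1/2}\widehat{\mathbf{W}}=\mathbf{H}^{-1/2}\mathbf{G}$ together, which has norm at most $C_F^2$. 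Then
\[
\|\mathbf{L}^{(t)}\|_F\le \|\mathbf{H}^{-1/2}\|_2\bigl(\|\mathbf{H}^{-1/2}\mathbf{G}\|_F+\|\mathbf{H}^{1/2}\|_2\|\mathbf{S}^{(t)}\|_F\bigr)\le C_F^3+C_F^2\|\mathbf{S}^{(t)}\|_F .
\]
If a few extra powers of $C_F$ still appear, I will absorb them by taking $C_F$ larger, since $C_F$ is only required to dominate the quantities in \eqref{eq:constant}.

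\textbf{Step 3: express $\mathbf{S}^{(t)}$ in terms of $\mathbf{D}^{(t)}$ and the duals.} The dual update at step $t-1$ reads $\mathbf{V}^{(t)}=\mathbf{V}^{(t-1)}+\rho_{t-1}(\mathbf{S}^{(t)}-\mathbf{D}^{(t)})$, so
\[
\mathbf{S}^{(t)}=\mathbf{D}^{(t)}+\frac{\mathbf{V}^{(t)}-\mathbf{V}^{(t-1)}}{\rho_{t-1}} .
\]
Therefore $\|\mathbf{S}^{(t)}\|_F\le \|\mathbf{D}^{(t)}\|_F+\|\mathbf{V}^{(t)}\|_F/\rho_{t-1}+\|\mathbf{V}^{(t-1)}\|_F/\rho_{t-1}$.

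Substituting Step 3 into Step 2 and using $C_F^2\le C_F^3$ (because $C_F\ge 1$) yields the claimed bound for every $t\ge1$. The restriction $t\ge1$ is needed so that $\mathbf{L}^{(t)}$ and $\mathbf{V}^{(t)}$ are both produced by the update rule.
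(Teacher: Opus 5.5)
Your proof is correct and follows the paper's route. Bound $\|\mathbf{L}^{(t)}\|_F$ via the explicit $\mathbf{L}$-update and $\|P_r(\mathbf{M})\|_F\le\|\mathbf{M}\|_F$, then substitute $\mathbf{S}^{(t)}=\mathbf{D}^{(t)}+(\mathbf{V}^{(t)}-\mathbf{V}^{(t-1)})/\rho_{t-1}$ from the dual update. Your Step 2 grouping $\mathbf{H}^{1/2}\widehat{\mathbf{W}}=\mathbf{H}^{-1/2}\mathbf{G}$ is in fact tidier than the paper's argument: the paper's last step bounds $C_F^2\|\widehat{\mathbf{W}}\|_F$ by $C_F^3$, which tacitly needs $\|\widehat{\mathbf{W}}\|_F\le C_F$, a quantity not listed in \eqref{eq:constant}; with your grouping, the fallback about enlarging $C_F$ is unnecessary.
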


\begin{lemma}\label{lemma1}
Let $\left\{\mathbf{D}^{(t)}\right\}_{t=0}^\infty$ and $\left\{\mathbf{V}^{(t)}\right\}_{t=0}^\infty$ be the sequence generated according to update rule \eqref{eq:finalupdate}. Then for any $t \ge 1$, it holds 
\begin{equation}
	{\| \mathbf{V}^{(t+1)} \|_F } \le 
	(C_F+C_F^4)\left(1+\|\mathbf{D}^{(t)}\|_F + \frac{\|\mathbf{V}^{(t)}\|_F}{\rho_{t-1}} + \frac{\|\mathbf{V}^{(t-1)}\|_F}{\rho_{t-1}}\right).
\end{equation}
and 
\begin{equation}
\| \mathbf{D}^{(t+1)} - \mathbf{D}^{(t)} \|_F \le
		 \frac{2C_F+2C_F^4}{\rho_{t}}\left(1+\|\mathbf{D}^{(t)}\|_F + \frac{\|\mathbf{V}^{(t)}\|_F}{\rho_{t-1}} + \frac{\|\mathbf{V}^{(t-1)}\|_F}{\rho_{t-1}}\right).
\end{equation}
\end{lemma}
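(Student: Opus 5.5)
The plan is to write $\mathbf{V}^{(t+1)}$ as a scaled projection residual and compare it with the residual at the feasible point $\mathbf{D}^{(t)}\in\mathcal{S}$. Set $\mathbf{Z}^{(t)} := \mathbf{S}^{(t+1)} + \mathbf{V}^{(t)}/\rho_t$. The $\mathbf{D}$-update gives $\mathbf{D}^{(t+1)} = P_{\mathcal{S}}(\mathbf{Z}^{(t)})$, and the dual update then gives
\begin{equation*}
\mathbf{V}^{(t+1)} = \rho_t\bigl(\mathbf{Z}^{(t)} - P_{\mathcal{S}}(\mathbf{Z}^{(t)})\bigr).
\end{equation*}
Since $P_{\mathcal{S}}$ returns a nearest point of $\mathcal{S}$ in Frobenius norm, and $\mathbf{D}^{(t)}\in\mathcal{S}$ for $t\ge 1$, we get
\begin{equation*}
\|\mathbf{V}^{(t+1)}\|_F \le \rho_t\|\mathbf{Z}^{(t)} - \mathbf{D}^{(t)}\|_F .
\end{equation*}
No convexity of $\mathcal{S}$ is needed for this step.

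Next I would compute $\rho_t(\mathbf{Z}^{(t)}-\mathbf{D}^{(t)})$ from the $\mathbf{S}$-update. Its optimality condition reads
\begin{equation*}
\mathbf{H}(\mathbf{S}^{(t+1)}+\mathbf{L}^{(t)}) - \mathbf{G} + \mathbf{V}^{(t)} + \rho_t(\mathbf{S}^{(t+1)}-\mathbf{D}^{(t)}) = \mathbf{0}.
\end{equation*}
Hence $\rho_t(\mathbf{Z}^{(t)}-\mathbf{D}^{(t)}) = \mathbf{G} - \mathbf{H}(\mathbf{S}^{(t+1)}+\mathbf{L}^{(t)})$, and so
\begin{equation*}
\|\mathbf{V}^{(t+1)}\|_F \le C_F + C_F\|\mathbf{S}^{(t+1)}+\mathbf{L}^{(t)}\|_F .
\end{equation*}
To control the sum, I rewrite the same optimality condition as
\begin{equation*}
(\mathbf{H}+\rho_t\mathbf{I})(\mathbf{S}^{(t+1)}+\mathbf{L}^{(t)}) = \mathbf{G} - \mathbf{V}^{(t)} + \rho_t(\mathbf{D}^{(t)}+\mathbf{L}^{(t)}).
\end{equation*}
Because $\mathbf{H}\succ 0$, we have $\|(\mathbf{H}+\rho_t\mathbf{I})^{-1}\|_2\le 1/\rho_t$ and $\|\rho_t(\mathbf{H}+\rho_t\mathbf{I})^{-1}\|_2\le 1$. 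This yields
\begin{equation*}
\|\mathbf{S}^{(t+1)}+\mathbf{L}^{(t)}\|_F \le \frac{C_F}{\rho_t} + \frac{\|\mathbf{V}^{(t)}\|_F}{\rho_t} + \|\mathbf{D}^{(t)}\|_F + \|\mathbf{L}^{(t)}\|_F .
\end{equation*}
Then I plug in Lemma~\ref{lemma0} for $\|\mathbf{L}^{(t)}\|_F$ and use $\rho_t\ge\rho_{t-1}$ to replace $1/\rho_t$ by $1/\rho_{t-1}$. Writing $Q_t := 1+\|\mathbf{D}^{(t)}\|_F + \|\mathbf{V}^{(t)}\|_F/\rho_{t-1} + \|\mathbf{V}^{(t-1)}\|_F/\rho_{t-1}$, this gives $\|\mathbf{S}^{(t+1)}+\mathbf{L}^{(t)}\|_F \lesssim C_F^3 Q_t$. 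Multiplying by $\|\mathbf{H}\|_2\le C_F$ and adding $\|\mathbf{G}\|_F\le C_F$ gives the first claim, with constant $(C_F+C_F^4)$.

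For the second claim I use the decomposition
\begin{equation*}
\mathbf{D}^{(t+1)}-\mathbf{D}^{(t)} = (\mathbf{Z}^{(t)}-\mathbf{D}^{(t)}) - (\mathbf{Z}^{(t)}-\mathbf{D}^{(t+1)}) = (\mathbf{Z}^{(t)}-\mathbf{D}^{(t)}) - \mathbf{V}^{(t+1)}/\rho_t .
\end{equation*}
Each term has already been bounded by $(C_F+C_F^4)Q_t/\rho_t$: the first through $\rho_t\|\mathbf{Z}^{(t)}-\mathbf{D}^{(t)}\|_F$, the second through the first claim. The triangle inequality then gives the factor $2(C_F+C_F^4)/\rho_t$.

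I expect the main obstacle to be the constant bookkeeping, specifically the term $C_F/\rho_t$. I would absorb it into $C_F\cdot 1$, either by assuming $\rho_0\ge 1$ or by enlarging $C_F$ to dominate $1/\rho_0$. I would also check that Lemma~\ref{lemma0}'s factor $C_F^3$, combined with the additional $\|\mathbf{D}^{(t)}\|_F$ and $\|\mathbf{V}^{(t)}\|_F/\rho_t$ terms, still fits within $C_F^4 Q_t$ after multiplication by $\|\mathbf{H}\|_2$. This uses $C_F\ge1$, so $C_F^3\ge C_F\ge 1$, and these extra terms are absorbed at the cost of at most a factor of two. If the stated constants are tight, I would simply enlarge $C_F$ in \eqref{eq:constant}.
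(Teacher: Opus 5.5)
Your outline follows the paper's proof step for step: the nearest-point comparison with the feasible $\mathbf{D}^{(t)}$, the $\mathbf{S}$-update optimality condition, Lemma~\ref{lemma0} with $\rho_t\ge\rho_{t-1}$, and the triangle-inequality split of $\mathbf{D}^{(t+1)}-\mathbf{D}^{(t)}$. One estimate, however, is lossy, and because of it you do not obtain the stated constant. You write $\|\mathbf{V}^{(t+1)}\|_F\le\|\mathbf{G}\|_F+\|\mathbf{H}\|_2\|\mathbf{S}^{(t+1)}+\mathbf{L}^{(t)}\|_F$ and then bound $\|\mathbf{S}^{(t+1)}+\mathbf{L}^{(t)}\|_F$ on its own, which counts $\mathbf{G}$ twice. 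After Lemma~\ref{lemma0} you get $\|\mathbf{V}^{(t+1)}\|_F\le(C_F+C_F^4)Q_t+\|\mathbf{H}\|_2\|\mathbf{G}\|_F/\rho_t$. Nothing in $(C_F+C_F^4)Q_t$ is guaranteed to absorb this extra additive term, which can be as large as $C_F^2/\rho_t$.

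Your patches do not remove it. Assuming $\rho_0\ge 1$ is an extra hypothesis, and one the paper's own choice $\rho_0=0.1$ violates; it only bounds the term by $C_F^2$, giving a constant like $C_F+C_F^2+C_F^4$. Taking $C_F\ge 1/\rho_0$ only bounds it by $C_F^3$. To make your route work you would have to add $\|\mathbf{G}\|_F(1+\|\mathbf{H}\|_2/\rho_0)\le C_F$ to \eqref{eq:constant}. That is harmless for Theorem~\ref{thm:admm}, whose constant may depend on $\rho_0$, but it proves the lemma for a different $C_F$ than the one in the statement. The same loss carries over to the second claim.

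The missing step is to combine your two identities before taking norms. Substitute $\mathbf{S}^{(t+1)}+\mathbf{L}^{(t)}=(\mathbf{H}+\rho_t\mathbf{I})^{-1}\bigl(\mathbf{G}-\mathbf{V}^{(t)}+\rho_t(\mathbf{D}^{(t)}+\mathbf{L}^{(t)})\bigr)$ into $\mathbf{G}-\mathbf{H}(\mathbf{S}^{(t+1)}+\mathbf{L}^{(t)})$. Then use $\mathbf{I}-\mathbf{H}(\mathbf{H}+\rho_t\mathbf{I})^{-1}=\rho_t(\mathbf{H}+\rho_t\mathbf{I})^{-1}$ and the fact that $\mathbf{H}$ commutes with $(\mathbf{H}+\rho_t\mathbf{I})^{-1}$. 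This gives exactly the paper's identity
\begin{equation*}
\rho_t\bigl(\mathbf{Z}^{(t)}-\mathbf{D}^{(t)}\bigr)=\Bigl(\mathbf{I}+\frac{\mathbf{H}}{\rho_t}\Bigr)^{-1}\Bigl(\mathbf{G}-\mathbf{H}\mathbf{L}^{(t)}-\mathbf{H}\mathbf{D}^{(t)}+\frac{\mathbf{H}\mathbf{V}^{(t)}}{\rho_t}\Bigr),
\end{equation*}
in which $\mathbf{G}$ appears only once and the prefactor has spectral norm at most $1$. Hence $\rho_t\|\mathbf{Z}^{(t)}-\mathbf{D}^{(t)}\|_F\le C_F\bigl(1+\|\mathbf{D}^{(t)}\|_F+\|\mathbf{V}^{(t)}\|_F/\rho_{t-1}\bigr)+C_F\|\mathbf{L}^{(t)}\|_F\le(C_F+C_F^4)Q_t$, with no condition on $\rho_0$. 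Both claims then follow exactly as you describe.

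Your nearest-point argument for $\|\mathbf{V}^{(t+1)}\|_F\le\rho_t\|\mathbf{Z}^{(t)}-\mathbf{D}^{(t)}\|_F$ is sound. It is also slightly cleaner than the paper's support-counting version, which is written for unstructured top-$k$ sparsity, since yours covers N:M sets without change.
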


\begin{lemma}\label{lemma2}
Let $\left\{\mathbf{D}^{(t)}\right\}_{t=0}^\infty$ and $\left\{\mathbf{V}^{(t)}\right\}_{t=0}^\infty$ be the sequence generated according to update rule \eqref{eq:finalupdate}. Then for any $t \ge1$, it holds 
\begin{equation}
\begin{aligned}
   & \| \mathbf{D}^{(t)} \|_F + \frac{ \|\mathbf{V}^{(t)} \|_F }{ \rho_{t-1} } + \frac{ \|\mathbf{V}^{(t-1)} \|_F }{ \rho_{t-1} } \\
    &\qquad\le \exp\left(3(C_F+C_F^4)\sum_{s=1}^{t-1}  \frac{1}{ \rho_{s-1} } \right)  \cdot 
		\left(  \| \mathbf{D}^{(1)} \|_F + \frac{\|\mathbf{V}^{(1)}\|_F }{\rho_0}  +\frac{\|\mathbf{V}^{(0)}\|_F }{\rho_0}  + \sum_{s=1}^{t-1} \frac{3(C_F+C_F^4)}{ \rho_{s-1} }   \right)
\end{aligned}
\end{equation}
\end{lemma}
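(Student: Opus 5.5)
The plan is a discrete Gr\"onwall argument driven by the one-step bounds of Lemma~\ref{lemma1}. Set $K = C_F + C_F^4$ and define, for $t\ge 1$,
\[
a_t := \|\mathbf{D}^{(t)}\|_F + \frac{\|\mathbf{V}^{(t)}\|_F}{\rho_{t-1}} + \frac{\|\mathbf{V}^{(t-1)}\|_F}{\rho_{t-1}},
\]
which is exactly the left-hand side of the claim. The steps are: first derive a linear one-step recursion $a_{t+1} \le (1+c_t)\,a_t + c_t$ with $c_t := 3K/\rho_{t-1}$, and then unroll it.

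\textbf{Step 1 (one-step recursion).} Write
\[
a_{t+1} = \|\mathbf{D}^{(t+1)}\|_F + \frac{\|\mathbf{V}^{(t+1)}\|_F}{\rho_t} + \frac{\|\mathbf{V}^{(t)}\|_F}{\rho_t}
\]
and bound the three terms separately.
\begin{itemize}
\item For the first term, use the triangle inequality and the second bound of Lemma~\ref{lemma1}: $\|\mathbf{D}^{(t+1)}\|_F \le \|\mathbf{D}^{(t)}\|_F + \frac{2K}{\rho_t}(1+a_t)$.
\item For the second term, the first bound of Lemma~\ref{lemma1} gives $\|\mathbf{V}^{(t+1)}\|_F/\rho_t \le \frac{K}{\rho_t}(1+a_t)$.
\item For the third term, monotonicity of $\rho_t$ gives $\|\mathbf{V}^{(t)}\|_F/\rho_t \le \|\mathbf{V}^{(t)}\|_F/\rho_{t-1}$.
\end{itemize}
Summing these, and using $1/\rho_t \le 1/\rho_{t-1}$ once more, we get
\[
a_{t+1} \le \|\mathbf{D}^{(t)}\|_F + \frac{\|\mathbf{V}^{(t)}\|_F}{\rho_{t-1}} + \frac{3K}{\rho_{t-1}}(1+a_t).
\]
The first two terms on the right are at most $a_t$, since we only drop the nonnegative summand $\|\mathbf{V}^{(t-1)}\|_F/\rho_{t-1}$. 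Hence $a_{t+1} \le (1+c_t)a_t + c_t$ for every $t\ge 1$.

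\textbf{Step 2 (unrolling).} By induction on $t$, the recursion gives
\[
a_t \le \prod_{s=1}^{t-1}(1+c_s)\,a_1 + \sum_{s=1}^{t-1} c_s \prod_{j=s+1}^{t-1}(1+c_j).
\]
Bound every partial product by the full product, and then use $1+x\le e^x$:
\[
\prod_{s=1}^{t-1}(1+c_s) \le \exp\Big(\sum_{s=1}^{t-1} c_s\Big).
\]
This yields
\[
a_t \le \exp\Big(3K\sum_{s=1}^{t-1}\rho_{s-1}^{-1}\Big)\Big(a_1 + \sum_{s=1}^{t-1} 3K/\rho_{s-1}\Big),
\]
which is exactly the claimed inequality, since $a_1 = \|\mathbf{D}^{(1)}\|_F + \|\mathbf{V}^{(1)}\|_F/\rho_0 + \|\mathbf{V}^{(0)}\|_F/\rho_0$. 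For $t=1$ the sums are empty and the claim reads $a_1\le a_1$.

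\textbf{Main obstacle.} There is little real difficulty once Lemma~\ref{lemma1} is available. The only delicate point is the index bookkeeping in Step~1. Lemma~\ref{lemma1} produces factors $1/\rho_t$, but the target exponent is written with $1/\rho_{s-1}$. We must therefore consistently use the non-decreasing property to replace $1/\rho_t$ by the larger $1/\rho_{t-1}$, including in the $\|\mathbf{V}^{(t)}\|_F$ term. This keeps the recursion coefficient at $3K/\rho_{t-1}$, so that it matches the stated sum $\sum_{s=1}^{t-1}1/\rho_{s-1}$.
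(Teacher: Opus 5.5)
Your proposal is correct and follows the paper's proof essentially step by step: the same quantity $a_t$, the same combination of Lemma~\ref{lemma1} with the monotonicity of $\rho_t$ to obtain $a_{t+1}\le (1+3K/\rho_{t-1})\,a_t + 3K/\rho_{t-1}$, and the same discrete Gr\"onwall unrolling with $1+x\le e^x$. The only cosmetic difference is that you unroll the recursion explicitly and bound each partial product by the full product, whereas the paper divides by $\prod_{s=1}^{t}(1+3K/\rho_{s-1})$ and telescopes.
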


Returning to the proof of the main theorem, define 
\begin{equation}
\begin{aligned}
    C_{A} &= 2(C_F+C_F^4)\Bigg[1+ \exp\left(3(C_F+C_F^4)\sum_{s=1}^{\infty}  \frac{1}{ \rho_{s-1} } \right)  \cdot \\
		&\qquad\qquad\qquad\qquad\qquad\left(  \| \mathbf{D}^{(1)} \|_F + \frac{\|\mathbf{V}^{(1)}\|_F }{\rho_0}  +\frac{\|\mathbf{V}^{(0)}\|_F }{\rho_0}  + \sum_{s=1}^{\infty} \frac{3(C_F+C_F^4)}{ \rho_{s-1} }   \right)\Bigg].
\end{aligned}
\end{equation}
It follows from the update rules \eqref{eq:finalupdate} that $C_{A}$ is a constant depending on $\mathbf{X}$, $\widehat{\mathbf{W}}$, $\lambda$, $\rho_0$, and $\sum_{t=0}^\infty 1/\rho_t$. 

Lemma \ref{lemma1} together with Lemma \ref{lemma2} yields
\begin{equation}
\| \mathbf{D}^{(t+1)} - \mathbf{D}^{(t)} \|_F \le
		 \frac{2C_F+2C_F^4}{\rho_{t}}\left(1+\|\mathbf{D}^{(t)}\|_F + \frac{\|\mathbf{V}^{(t)}\|_F}{\rho_{t-1}} + \frac{\|\mathbf{V}^{(t-1)}\|_F}{\rho_{t-1}}\right)\le \frac{C_A}{\rho_t}.
\end{equation}
and
\begin{equation}
\| \mathbf{V}^{(t+1)} \|_F \le
		 (C_F+2C_F^4)\left(1+\|\mathbf{D}^{(t)}\|_F + \frac{\|\mathbf{V}^{(t)}\|_F}{\rho_{t-1}} + \frac{\|\mathbf{V}^{(t-1)}\|_F}{\rho_{t-1}}\right)\le \frac{C_A}{2}.
\end{equation}
It then follows from $\mathbf{V}$-update rule and triangle inequality that 
\begin{equation}
    \begin{aligned}
        \| \mathbf{S}^{(t+1)} - \mathbf{S}^{(t)} \|_F & \le \| \mathbf{S}^{(t+1)} - \mathbf{D}^{(t+1)} \|_F + \| \mathbf{D}^{(t+1)} - \mathbf{D}^{(t)} \|_F + \| \mathbf{S}^{(t)} - \mathbf{D}^{(t)} \|_F\\
        & \le \frac{\|\mathbf{V}^{(t+1)}\|_F+\|\mathbf{V}^{(t)}\|_F}{\rho_t} + \| \mathbf{D}^{(t+1)} - \mathbf{D}^{(t)} \|_F + \frac{\|\mathbf{V}^{(t)}\|_F+\|\mathbf{V}^{(t-1)}\|_F}{\rho_{t-1}}\\
        & \le \frac{3C_A}{\rho_{t-1}}.
    \end{aligned}
\end{equation}
% Since $\mathbf{S}^{(t+1)} - \mathbf{D}^{(t+1)} = (\mathbf{V}^{(t+1)}-\mathbf{V}^{(t)})/\rho_t$ from the $\mathbf{V}$-update rule, we have
% \begin{equation}
% \| \mathbf{S}^{(t+1)} - \mathbf{S}^{(t)} \|_F \le \| \mathbf{S}^{(t+1)} - \mathbf{D}^{(t+1)} \|_F + \| \mathbf{D}^{(t+1)} - \mathbf{D}^{(t)} \|_F + \| \mathbf{S}^{(t)} - \mathbf{D}^{(t)} \|_F.
% \end{equation}

% Using the fact that $\mathbf{S}^{(t)} - \mathbf{D}^{(t)} = (\mathbf{V}^{(t)}-\mathbf{V}^{(t-1)})/\rho_{t-1}$, we get
% \begin{equation}
% \| \mathbf{S}^{(t+1)} - \mathbf{S}^{(t)} \|_F \le \frac{\|\mathbf{V}^{(t+1)}\|_F+\|\mathbf{V}^{(t)}\|_F}{\rho_t} + \| \mathbf{D}^{(t+1)} - \mathbf{D}^{(t)} \|_F + \frac{\|\mathbf{V}^{(t)}\|_F+\|\mathbf{V}^{(t-1)}\|_F}{\rho_{t-1}}.
% \end{equation}
% By Lemmas \ref{lemma1} and \ref{lemma2}, this is bounded by $C_A/\rho_t$ (after adjusting the constant if necessary).
According to $\mathbf{L}$-update rule, we have
\begin{equation}
\begin{aligned}
\|\mathbf{L}^{(t+1)} - \mathbf{L}^{(t)}\|_F &= \left\|\mathbf{H}^{-1/2} P_r(\mathbf{H}^{1/2} (\widehat{\mathbf{W}}- \mathbf{S}^{(t+1)})) - \mathbf{H}^{-1/2} P_r(\mathbf{H}^{1/2} (\widehat{\mathbf{W}}- \mathbf{S}^{(t)}))\right\|_F\\
&\le \|\mathbf{H}^{-1/2}\|_2 \left\|P_r(\mathbf{H}^{1/2} (\widehat{\mathbf{W}}- \mathbf{S}^{(t+1)})) - P_r(\mathbf{H}^{1/2} (\widehat{\mathbf{W}}- \mathbf{S}^{(t)}))\right\|_F\\
&\le C_F \|\mathbf{H}^{1/2}\|_2 \|\mathbf{S}^{(t+1)} - \mathbf{S}^{(t)}\|_F\\
&\le C_F^2 \|\mathbf{S}^{(t+1)} - \mathbf{S}^{(t)}\|_F\\
&\le \frac{3C_F^2 C_A}{\rho_{t-1}}.
\end{aligned}
\end{equation}
Therefore, with constant $C=3C_F^2C_A$, we obtain
\begin{equation}
\max\{\| \mathbf{S}^{(t+1)} - \mathbf{S}^{(t)} \|_F, \| \mathbf{L}^{(t+1)} - \mathbf{L}^{(t)} \|_F\} \le \frac{C}{\rho_{t-1}}.
\end{equation}
Since $\sum_{s=0}^\infty 1/\rho_s <\infty$, both $\{\mathbf{S}^{(t)}\}_{t=0}^\infty$ and $\{\mathbf{L}^{(t)}\}_{t=0}^\infty$ are Cauchy sequences. Therefore, there exist matrices $\bar{\mathbf{S}}$ and $\bar{\mathbf{L}}$ such that $\mathbf{S}^{(t)} \rightarrow \bar{\mathbf{S}}$ and $\mathbf{L}^{(t)} \rightarrow \bar{\mathbf{L}}$ as $t \rightarrow \infty$. Setting $\bar{\mathbf{W}} = \bar{\mathbf{S}} + \bar{\mathbf{L}}$, we conclude that $\mathbf{S}^{(t)}+\mathbf{L}^{(t)} \rightarrow \bar{\mathbf{W}}$ as $t \rightarrow \infty$.
\end{proof}

\subsection{Proof of Lemma \ref{lemma0}}
\begin{proof}
The $\mathbf{L}$-update rule in \eqref{eq:finalupdate}, together with \eqref{eq:constant} yields
\begin{equation}
\begin{aligned}
\|\mathbf{L}^{(t)} \|_F & = \left\|\mathbf{H}^{-1/2} P_r(\mathbf{H}^{1/2} (\widehat{\mathbf{W}}- \mathbf{S}^{(t)}))\right\|_F \\
& \le \| \mathbf{H}^{-1/2}\|_2 \left\|P_r(\mathbf{H}^{1/2} (\widehat{\mathbf{W}}- \mathbf{S}^{(t)}))\right\|_F \\
& \le  C_F \left\|\mathbf{H}^{1/2} (\widehat{\mathbf{W}}- \mathbf{S}^{(t)})\right\|_F\\
& \le C_F\left\|\mathbf{H}^{1/2}\right\|_2 \|\widehat{\mathbf{W}}\|_F + C_F\left\|\mathbf{H}^{1/2}\right\|_2 \left\|\mathbf{S}^{(t)}\right\|_F\\
&\le C_F^2 \|\widehat{\mathbf{W}}\|_F +C_F^2 \|\mathbf{S}^{(t)}\|_F,
\end{aligned}
\end{equation}
where the second inequality follows from the non-expansiveness of rank-r projection operator $P_r$ in Frobenius norm. It then follows from the $\mathbf{V}$-update rule in \eqref{eq:finalupdate} that 
\begin{equation}
\begin{aligned}
\|\mathbf{L}^{(t)} \|_F &\le C_F^2 \|\widehat{\mathbf{W}}\|_F +C_F^2 \|\mathbf{S}^{(t)}\|_F\\
& = C_F^2 \|\widehat{\mathbf{W}}\|_F +C_F^2 \left\|\mathbf{D}^{(t)} + \frac{\mathbf{V}^{(t)}-\mathbf{V}^{(t-1)}}{\rho_{t-1}}\right\|_F \\
& \le C_F^3\left(1+\|\mathbf{D}^{(t)}\|_F + \frac{\|\mathbf{V}^{(t)}\|_F}{\rho_{t-1}} + \frac{\|\mathbf{V}^{(t-1)}\|_F}{\rho_{t-1}}\right).
\end{aligned}
\end{equation}
\end{proof}

\subsection{Proof of Lemma \ref{lemma1}}

\begin{proof}
According to the $\mathbf{S}$-update rule in \eqref{eq:finalupdate}, it holds
\begin{equation}
\begin{aligned}
\mathbf{S}^{(t+1)} - \mathbf{D}^{(t)} + \frac{\mathbf{V}^{(t)}}{\rho_t}  
&	=  (\mathbf{H}+\rho_t \mathbf{I})^{-1} (\mathbf{G} - \mathbf{H}\mathbf{L}^{(t)} - \mathbf{V}^{(t)} +\rho_t \mathbf{D}^{(t)}) - \mathbf{D}^{(t)} + \frac{\mathbf{V}^{(t)}}{\rho_t}	\\
&	=   \left( (\mathbf{H}+\rho_t \mathbf{I} )^{-1} \rho_t -\mathbf{I} \right)	\mathbf{D}^{(t)} + (\mathbf{H}+\rho_t \mathbf{I} )^{-1} (\mathbf{G}- \mathbf{H}\mathbf{L}^{(t)}-\mathbf{V}^{(t)}) + \frac{\mathbf{V}^{(t)}}{\rho_t} \\
&	= -\frac{1}{\rho_t}  \left( \mathbf{I} +  \frac{\mathbf{H}}{\rho_t} \right)^{-1} \mathbf{H} \mathbf{D}^{(t)}  + 
			\frac{1}{\rho_t} \left( \mathbf{I} +  \frac{\mathbf{H}}{\rho_t} \right)^{-1} (\mathbf{G}- \mathbf{H}\mathbf{L}^{(t)}-\mathbf{V}^{(t)}) + \frac{\mathbf{V}^{(t)}}{\rho_t} \\
&	= \frac{1}{\rho_t}  \left( \mathbf{I} +  \frac{\mathbf{H}}{\rho_t} \right)^{-1} ( \mathbf{G}- \mathbf{H}\mathbf{L}^{(t)}-\mathbf{H}\mathbf{D}^{(t)} ) 
			+ \frac{1}{\rho_t}  \left[ \mathbf{I} - \left( \mathbf{I} +  \frac{\mathbf{H}}{\rho_t} \right)^{-1}  \right] \mathbf{V}^{(t)} \\
&	=  \frac{1}{\rho_t}  \left( \mathbf{I} +  \frac{\mathbf{H}}{\rho_t} \right)^{-1} \left( \mathbf{G}- \mathbf{H}\mathbf{L}^{(t)}-\mathbf{H}\mathbf{D}^{(t)}  + \frac{\mathbf{H}\mathbf{V}^{(t)}}{\rho_t} \right)
\end{aligned}
\end{equation}
Therefore, we obtain
\begin{equation}\label{eq:key1}
\begin{aligned}
\left\| \mathbf{S}^{(t+1)} - \mathbf{D}^{(t)} + \frac{\mathbf{V}^{(t)}}{\rho_t}  \right\|_F 
& \le  \frac{1}{\rho_t} \left\| \left( \mathbf{I} +  \frac{\mathbf{H}}{\rho_t} \right)^{-1} \right\|_2 \left\|  \mathbf{G}- \mathbf{H}\mathbf{L}^{(t)}-\mathbf{H}\mathbf{D}^{(t)}  + \frac{\mathbf{H}\mathbf{V}^{(t)}}{\rho_t}  \right\|_F \\
& \le \frac{1}{\rho_t}  \left\|  \mathbf{G}- \mathbf{H}\mathbf{L}^{(t)}-\mathbf{H}\mathbf{D}^{(t)}  + \frac{\mathbf{H}\mathbf{V}^{(t)}}{\rho_t} \right\|_F \\
& \le \frac{1}{\rho_t} \left(  \| \mathbf{G}- \mathbf{H}\mathbf{L}^{(t)}-\mathbf{H}\mathbf{D}^{(t)}   \|_F + \frac{ \| \mathbf{H}\mathbf{V}^{(t)}\|_F}{ \rho_t }  \right).
\end{aligned}
\end{equation}
Denote $ \tilde{\mathcal{I}} := \{(i,j) \in [N_{in}]\times [N_{out}] \mid \mathbf{D}^{(t)}_{ij} = 0\}$. It follows from the $\mathbf{D}$-update rule and the definition of the projection operator that
\begin{equation}\label{eq:key2}
\begin{aligned}
		&
		\left\| \mathbf{D}^{(t+1)} - \mathbf{S}^{(t+1)}- \frac{\mathbf{V}^{(t)}}{\rho_t}  \right\|_F^2	= \ 
		\min_{ \substack{ \mathcal{I}\subseteq [N_{in}]\times [N_{out}]  \\ |\mathcal{I}| = N_{in}N_{out}-k  } }	\sum_{(i,j) \in  \mathcal{I}} \left( \mathbf{S}^{(t+1)} +  \frac{\mathbf{V}^{(t)}}{\rho_t}  \right)^2_{i,j} \\
		\le \ & 	\sum_{(i,j) \in \tilde{\mathcal{I}} } \left(\mathbf{S}^{(t+1)} +  \frac{\mathbf{V}^{(t)}}{\rho_t}   \right)^2_{i,j} 
		= \ 	\sum_{(i,j) \in \tilde{\mathcal{I}}  } \left( \mathbf{S}^{(t+1)} - \mathbf{D}^{(t)} +  \frac{\mathbf{V}^{(t)}}{\rho_t}  \right)^2_{i,j} \\
		\le \ & \left\|  \mathbf{S}^{(t+1)} - \mathbf{D}^{(t)} +  \frac{\mathbf{V}^{(t)}}{\rho_t}   \right\|_F^2
\end{aligned}
\end{equation}
Together with \eqref{eq:key1}, we get
\begin{equation}\label{eq:key3}
	\left\| \mathbf{D}^{(t+1)} - \mathbf{S}^{(t+1)}- \frac{\mathbf{V}^{(t)}}{\rho_t}  \right\|_F \le 
	\frac{1}{\rho_t} \left(  \| \mathbf{G}- \mathbf{H}\mathbf{L}^{(t)}-\mathbf{H}\mathbf{D}^{(t)}   \|_F + \frac{ \| \mathbf{H}\mathbf{V}^{(t)}\|_F}{ \rho_t }  \right).
\end{equation}
It then follows from the $\mathbf{V}$-update rule that
\begin{equation}\label{eq:key4}
	\frac{\|\mathbf{V}^{(t+1)}\|_F}{\rho_t}  = 	\left\| \mathbf{D}^{(t+1)} - \mathbf{S}^{(t+1)}- \frac{\mathbf{V}^{(t)}}{\rho_t}  \right\|_F \le 
	\frac{1}{\rho_t} \left(  \| \mathbf{G}- \mathbf{H}\mathbf{L}^{(t)}-\mathbf{H}\mathbf{D}^{(t)}   \|_F + \frac{ \| \mathbf{H}\mathbf{V}^{(t)}\|_F}{ \rho_t }  \right)
\end{equation}
According to Lemma \ref{lemma0} and the monotonicity of $\{\rho_t\}_{t=0}^\infty$, it holds
\begin{equation}\label{eq:key5}
\begin{aligned}
    \| \mathbf{G}- \mathbf{H}\mathbf{L}^{(t)}-\mathbf{H}\mathbf{D}^{(t)}   \|_F + \frac{ \| \mathbf{H}\mathbf{V}^{(t)}\|_F}{ \rho_t } &\le \|\mathbf{G}\|_F + \|\mathbf{H}\|_2\|\mathbf{L}^{(t)}\|_F +  \|\mathbf{H}\|_2\|\mathbf{D}^{(t)}\|_F+\frac{\|\mathbf{H}\|_2\|\mathbf{V}^{(t)}\|_F}{\rho_t}\\
    &\le C_F\left(1+\|\mathbf{D}^{(t)}\|_F+\frac{\|\mathbf{V}^{(t)}\|_F}{\rho_{t-1}}\right) + C_F \|\mathbf{L}^{(t)}\|_F \\
    & \le (C_F + C_F^4)\left(1+\|\mathbf{D}^{(t)}\|_F + \frac{\|\mathbf{V}^{(t)}\|_F}{\rho_{t-1}} + \frac{\|\mathbf{V}^{(t-1)}\|_F}{\rho_{t-1}}\right).
\end{aligned}
\end{equation}
Together with inequality \eqref{eq:key4}, this establishes the first inequality of the lemma. Furthermore, by summing up \eqref{eq:key1} and \eqref{eq:key3} and applying the triangle inequality, we verify the second inequality.
\end{proof}

\subsection{Proof of Lemma \ref{lemma2}}
\begin{proof}
It follows from Lemma~\ref{lemma1} that
\begin{equation}\label{eq:key6}
\begin{aligned}
\frac{\| \mathbf{V}^{(t+1)} \|_F }{\rho_t}  & \le  \frac{C_F + C_F^4}{\rho_t}\left(1+\|\mathbf{D}^{(t)}\|_F + \frac{\|\mathbf{V}^{(t)}\|_F}{\rho_{t-1}} + \frac{\|\mathbf{V}^{(t-1)}\|_F}{\rho_{t-1}}\right)
\end{aligned}
\end{equation}
and
\begin{equation}
\begin{aligned}
\|\mathbf{D}^{(t+1)}\|_F & \le \|\mathbf{D}^{(t)}\|_F+ \| \mathbf{D}^{(t+1)} - \mathbf{D}^{(t)} \|_F \\
&\le  \|\mathbf{D}^{(t)}\|_F + \frac{2C_F+2C_F^4}{\rho_{t}}\left(1+\|\mathbf{D}^{(t)}\|_F + \frac{\|\mathbf{V}^{(t)}\|_F}{\rho_{t-1}} + \frac{\|\mathbf{V}^{(t-1)}\|_F}{\rho_{t-1}}\right).
\end{aligned}
\end{equation}
Summing up these two inequalities yields
\begin{equation}
\begin{aligned}
& \| \mathbf{D}^{(t+1)} \|_F + \frac{\| \mathbf{V}^{(t+1)} \|_F }{ \rho_{t} } + \frac{\| \mathbf{V}^{(t)} \|_F }{ \rho_{t} }
 \le \| \mathbf{D}^{(t+1)} \|_F + \frac{\| \mathbf{V}^{(t+1)} \|_F }{ \rho_{t} } + \frac{\| \mathbf{V}^{(t)} \|_F }{ \rho_{t-1} } \\
& \qquad\le  \frac{3C_F+3C_F^4}{\rho_{t}}\left(1+\|\mathbf{D}^{(t)}\|_F + \frac{\|\mathbf{V}^{(t)}\|_F}{\rho_{t-1}} + \frac{\|\mathbf{V}^{(t-1)}\|_F}{\rho_{t-1}}\right) + \|\mathbf{D}^{(t)}\|_F  +  \frac{\| \mathbf{V}^{(t)} \|_F }{ \rho_{t-1} } \\
&\qquad \le \left(1+ \frac{3C_F+3C_F^4}{\rho_{t-1}}\right)\left( \|\mathbf{D}^{(t)}\|_F + \frac{\|\mathbf{V}^{(t)}\|_F}{\rho_{t-1}} + \frac{\|\mathbf{V}^{(t-1)}\|_F}{\rho_{t-1}}\right) + \frac{3C_F+3C_F^4}{\rho_{t-1}},
\end{aligned}
\end{equation}
Denote $a_t:= \|\mathbf{D}^{(t)} \|_F+\|\mathbf{V}^{(t)}\|_F / \rho_{t-1} + \|\mathbf{V}^{(t-1)}\|_F / \rho_{t-1} $, then the above inequality can be rewritten as
\begin{equation}
a_{t+1} \le \left( 1 + \frac{3C_F+3C_F^4}{\rho_{t-1}}\right) a_t + \frac{3C_F+3C_F^4}{\rho_{t-1}}
\end{equation}
Therefore, 
\begin{equation}
\begin{aligned}
	\frac{a_{t+1}}{ \prod_{s=1}^{t} ( 1+ 3(C_F+C_F^4)/\rho_{s-1} ) } \le \ &	\frac{a_{t}}{ \prod_{s=1}^{t-1} ( 1+ 3(C_F+C_F^4)/\rho_{s-1} ) } + \frac{3(C_F+C_F^4)}{\rho_{t-1} \prod_{s=0}^{t} ( 1+ 3(C_F+C_F^4)/\rho_{s-1} ) }		\\
	\le \ &
		\frac{a_{t}}{ \prod_{s=1}^{t-1} ( 1+ 3(C_F+C_F^4)/\rho_{s-1} ) } +  \frac{3(C_F+C_F^4)}{\rho_{t-1}}
\end{aligned}
\end{equation}
It then follows from telescoping that
\begin{equation}
\frac{a_{t}}{ \prod_{s=1}^{t-1} ( 1+ 3(C_F+C_F^4)/\rho_{s-1} ) } \le a_1 + \sum_{s=1}^{t-1} \frac{3(C_F+C_F^4) }{ \rho_{s-1} }
\end{equation}
Note that 
\begin{equation}
  \prod_{s=1}^{t-1} ( 1+ 3(C_F+C_F^4)/\rho_{s-1} )\le \exp\left(3(C_F+C_F^4)\sum_{s=1}^{t-1}\frac{1}{\rho_{s-1}}\right),
\end{equation}
recalling the definition of $a_t$ completes the proof.
\end{proof}

\newpage
\section{Additional Experimental Details}\label{sect:appendix-exp-details}
\paragraph{Computing environments} All experiments were conducted on a computing cluster. Unless otherwise specified, we utilized an Intel Xeon Gold 6248 machine with 16 CPU cores and a single NVIDIA L40 48GB / A100 80GB / H100 80GB GPU. When runtime compression results are reported, all experiments have been run on the same node (including GPU) configuration. All language models and pruning methods were implemented using the PyTorch library \cite{paszke2017automatic}.

\paragraph{Implementation Details of \texttt{3BASiL}}
We use $\mathbf{H}^\prime = \mathbf{H}+0.005\text{diag}(\mathbf{X}^\top\mathbf{X})+0.005\operatorname{Tr}(\mathbf{X}^\top\mathbf{X})\mathbf{I}$.

In practice, we employ an iteration-dependent penalty parameter $\rho_t$, giving the following updates at iteration $t$:
 \begin{alignat}{2}
&\mathbf{S}^{(t+1)}\!=\!\left( \mathbf{H}+\rho_t \mathbf{I}\right)^{-1} ( \mathbf{H}(\widehat{\mathbf{W}}-\mathbf{L}^{(t)}) - \mathbf{V}^{(t)} +\rho_t \mathbf{D}^{(t)})\,\, &\mathbf{L}^{(t+1)} &\!=\! \mathbf{H}^{-1/2} P_r(\mathbf{H}^{1/2} (\widehat{\mathbf{W}}- \mathbf{S}^{(t+1)})) \notag \\
&\mathbf{D}^{(t+1)} \!=\! P_\mathcal{S}(\mathbf{S}^{(t+1)} + \mathbf{V}^{(t)}/\rho_t)\,\,& \mathbf{V}^{(t+1)}& \!=\! \mathbf{V}^{(t)} + \rho_t(\mathbf{S}^{(t+1)}-\mathbf{D}^{(t+1)}).
\end{alignat}

The initial $\rho_0 = 0.1$. The $\rho$-update for ADMM depends on the support change similar to what was proposed by \cite{meng2024alps}. The \slr decomposition is more \say{sensitive} to increasing $\rho$ aggressively compared to pure pruning in the works of \cite{meng2024alps}. We use the following $\rho$ update rules.
We update $\rho$ every $10$ iteration based on a step function that depends on the current value of $\rho_t$ and $s_t := |\operatorname{Supp}\left(\mathbf{D}^{(t)}\right) \Delta \operatorname{Supp}\left(\mathbf{D}^{(t-10)}\right)|$, which represents the number of elements in the symmetric difference between $\operatorname{Supp}\left(\mathbf{D}^{(t)}\right)$ and $\operatorname{Supp}\left(\mathbf{D}^{(t-10)}\right)$. Specifically, we set
\begin{equation}
\rho_{t+1} = \left\{\begin{array}{ll}
1.1\rho_t & \text{ if } s_t \ge 0.1k, \\
1.05\rho_t & \text{ if } s_t \ge 0.005k, \\
1.02\rho_t & \text{ if } s_t \ge 0.5. \\
\end{array} \right.
\end{equation}
It is worth noting that the algorithm can converge significantly faster if we set these parameters to the ones proposed by \cite{meng2024alps} (ADMM for pruning) but the solution quality can be slightly compromised.

\paragraph{Implementation Details of transformer matching (\texttt{TM})} Given a transformer $T_i$ with input activations $\mathbf{X_i}$ (obtained from the outputs of the previously compressed transformer block $T_{i-1}$, we start by creating a copy of $T_i$, termed $T_i^{\textbf{ori}}$. We then compress $T_i$ layers using an \slr method. We now replace dense layers with LoRA layers that contain new linear sparse layers and low-rank components $\mathbf{A, B}$. We set all parameters in transformer block $T_i$ to be trainable and minimize using Adam the loss $\|T_i(\mathbf{X_i}) - T_i^{(ori)}(\mathbf{X_i})\|_F^2$. The input activations fed into subsequent transformer blocks are $T_i^{(TM)}(\mathbf{X_i})$, where $T_i^{(TM)}$ is the transformer block after \slr decomposition and \texttt{TM} refinement steps.

For \texttt{TM} step, we employ the Adam optimizer with PyTorch’s default hyperparameters.  We use 20 epochs (on the $128$ calibration data points selected for compression). The batch size used is $8$. The learning rate is $2e^{-5}$ using a Cosine Annealing Scheduler with $\eta_\text{min}=4e^{-6}$.

\paragraph{Baseline Implementation Details}
Below are the implementation specifications for:
\begin{itemize}[noitemsep,topsep=0pt,parsep=0pt,partopsep=0pt, leftmargin=*]
    \item \textbf{OATS:} We adopt the official implementation from \cite{zhang2024oats} (accessible via \href{https://github.com/stephenqz/OATS}{GitHub}) and apply the default hyperparameters and 80 alternating minimization steps.
    \item \textbf{HASSLE-free-SparseGPT:} We adopt the official implementation from \cite{makni2025hasslefree} (accessible via \href{https://github.com/mazumder-lab/HASSLE-free/tree/main}{GitHub}) and provide an improved implementation that uses the closed-form solution \cref{eq:L-update} for the low-rank fitting step. We apply the default hyperparameters and 80 alternating minimization steps.
    \item \textbf{HASSLE-free-ALPS:}  We adopt the official implementation from \cite{makni2025hasslefree} (accessible via \href{https://github.com/mazumder-lab/HASSLE-free/tree/main}{GitHub})  and provide an improved implementation that uses the closed-form solution \cref{eq:L-update} for the low-rank fitting step. We apply the default hyperparameters and 80 alternating minimization steps.
\end{itemize}
In \cref{tab:compression-rho}, we use the values reported in \cite{makni2025hasslefree}. For all other reported values, instead of minimizing $\|\mathbf{X}(\mathbf{W}-\mathbf{M})\|_F  \; \text{s.t.} \, \operatorname{rank}(\mathbf{M}) \leq r$ by reparameterizing $\mathbf{M}=\mathbf{UV}^\top$ and optimizing with gradient-descent on $\mathbf{U}$ and $\mathbf{V}$ as proposed by the authors, we use our improved implementation of HASSLE-free with closed-form solution \cref{eq:L-update}. This results in significant speedup improvements. A slight improvement in LLM evaluation benchmarks is also sometimes observed using the improved implementation. This is expected because gradient-descent on $\mathbf{U}$ and $\mathbf{V}$ \textbf{approximately} solves the reduced-rank regression problem, whereas the closed-form solution is an optimal solution.

\cref{tab:our-hassle-free} shows an extract of the differences between the implementation of HASSLE-free proposed in \cite{makni2025hasslefree} and ours (using closed-form solution for low-rank update). Moreover, the original paper reports a compression runtime (of a Llama3-8B under a 2:4+64LR configuration) of 20.13 hours using a single A100 80GB GPU, whereas we report a compression runtime (for the same setup) of 15.71 hours in \cref{fig:transformer-matching-universality} (using a single A100 80GB GPU) thanks to the efficiency of the closed-form solution. It is worth noting that \ourmethod and \texttt{3BASiL-TM} are still over 7 times and 3 times, respectively, faster than HASSLE-free-ALPS, even when using the improved implementation for HASSLE-free.

\begin{table}
\centering
\resizebox{\textwidth}{!}{%
\centering
\begin{tabular}{llcccccccccccccc}
\toprule
\multirow{2.25}{*}{\textbf{Model}} & \multirow{2.25}{*}{\textbf{Algorithm}} && \multicolumn{3}{c}{\textbf{Perplexity ($\downarrow$)}} && \multicolumn{9}{c}{\textbf{Zero-shot ($\uparrow$)}} \\
\cmidrule(lr){4-6} \cmidrule(lr){8-16}
&&& \textbf{C4} & \textbf{WT2} & \textbf{PTB} 
&& \textbf{PIQA} & \textbf{HS} & \textbf{ARC-E} & \textbf{ARC-C} & \textbf{WG} & \textbf{RTE} & \textbf{OQA} & \textbf{BoolQ} & \textbf{Avg} \\ 
\midrule
\multirow{4}{*}{Llama3-8B}
& Hf-SparseGPT-original && 18.06 & 12.66 & 18.66 && 74.86 & 64.77 & 63.85 & 37.37 & 69.22 & 56.68 & 36.40 & 76.12 & 59.91 \\
& Hf-SparseGPT-ours     && 17.77 & 12.38 & 18.71 && 74.81 & 65.04 & 66.16 & 38.57 & 70.09 & 54.87 & 38.40 & 77.71 & 60.71 \\
& EoRA-SparseGPT && 21.89 & 15.69 & 23.91 && 72.25 & 58.87 & 57.70 & 34.56 & 66.30 & 54.87 & 33.80 & 73.58 & 56.49 \\

& Hf-ALPS-original      && 16.76 & 11.83 & 17.76 && 75.08 & 66.37 & 63.64 & 37.54 & 69.69 & 64.62 & 37.20 & 77.89 & 61.50 \\
& Hf-ALPS-ours          && 16.15 & 11.38 & 16.71 && 75.19 & 67.10 & 64.44 & 38.91 & 69.53 & 59.93 & 39.40 & 78.38 & 61.61 \\
& EoRA-ALPS && {18.69} & {13.61} & {20.55} && {73.99} & {62.21} & {61.07} & {37.20} & {68.59} & {57.40} & {36.00} & {74.16} & {58.83} \\
\bottomrule
\end{tabular}
}
\vspace{0.5em}
\caption{Comparison of (original paper), our reproduced results (with improved implementation) and EoRA (reduces to HASSLE-free with alternating minimization steps set to 1) for Llama3-8B under the configuration (2:4 + 64). Perplexity (lower is better), Zero-shot accuracy (higher is better).}\label{tab:our-hassle-free}
\end{table}

%TODO add eora

\paragraph{LoRA Finetuning Details}
We follow a similar LoRA fine-tuning pipeline to the one introduced in \cite{guo2023lq}.
For LoRA fine-tuning use a learning rate of 2e-5 and a batch size of size 64 per step. The block size used is $1024$ tokens per batch. The effective batch size is obtained by using a physical batch size of 2 on GPU with 32 gradient accumulation steps before each weight update. Training is conducted on 10\% of the first shard of the C4 training dataset, which contains over 15 million tokens. We employ the Adam optimizer with PyTorch's default hyperparameters. A cosine learning rate scheduler is used, with a warm-up ratio of 0.03 and no weight decay applied.

\paragraph{Layer-wise reconstruction error of \ourmethod} \cref{fig:attn-layers-loss} and \cref{fig:mlp-layers-loss} show the objective of \cref{eq:original} attained by \ourmethod and other \slr methods for the first transformer block of a 2:4+64LR decomposition of a Llama3-8B model for Attention and MLP layers, respectively. 

\begin{figure}[h]
    \centering
    \begin{subfigure}[b]{0.48\textwidth}
        \centering
        \includegraphics[width=\textwidth]{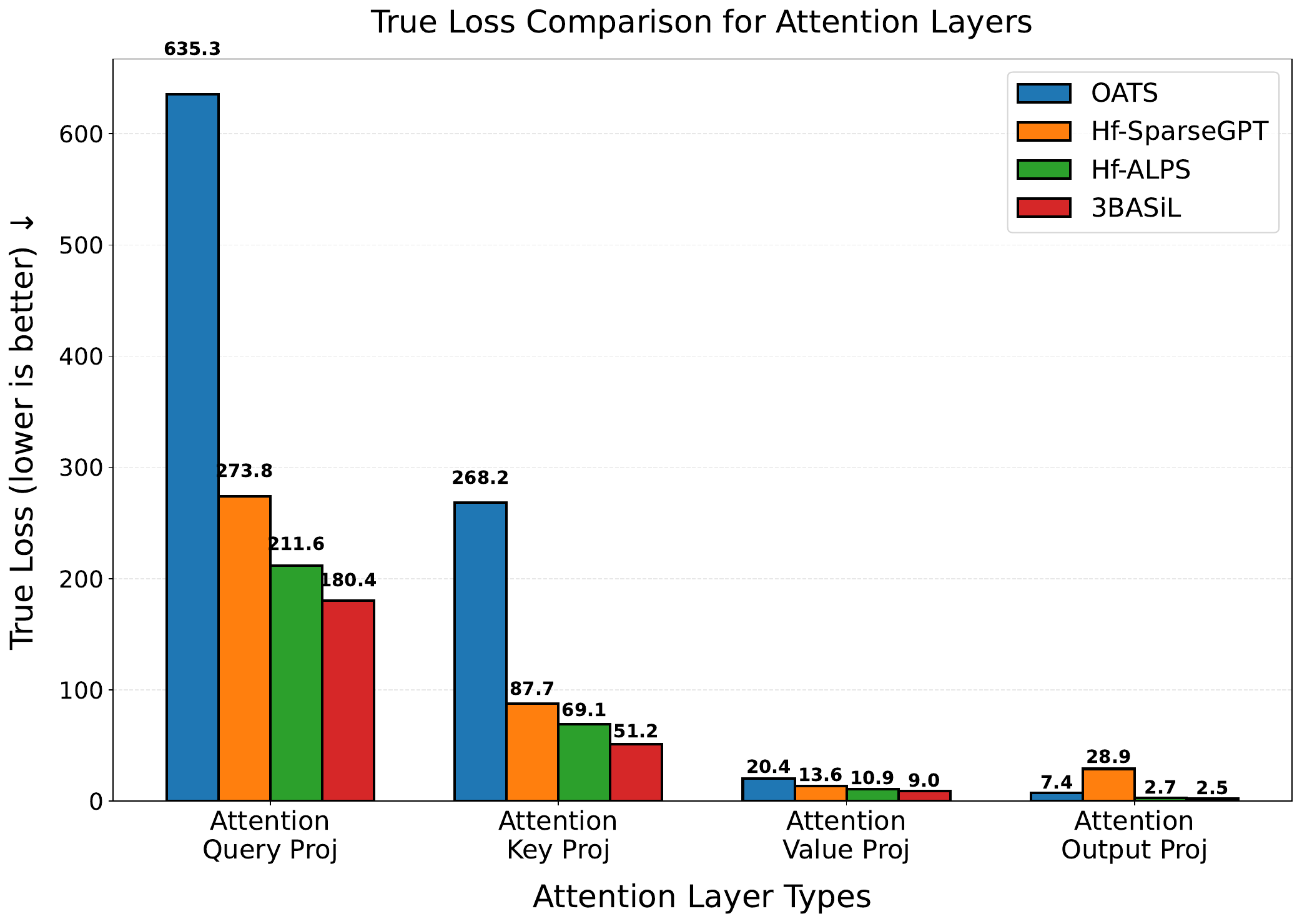}
        \caption{True loss for attention layers (linear scale).}
        \label{fig:attn-layers-loss}
    \end{subfigure}
    \hfill
    \begin{subfigure}[b]{0.48\textwidth}
        \centering
        \includegraphics[width=\textwidth]{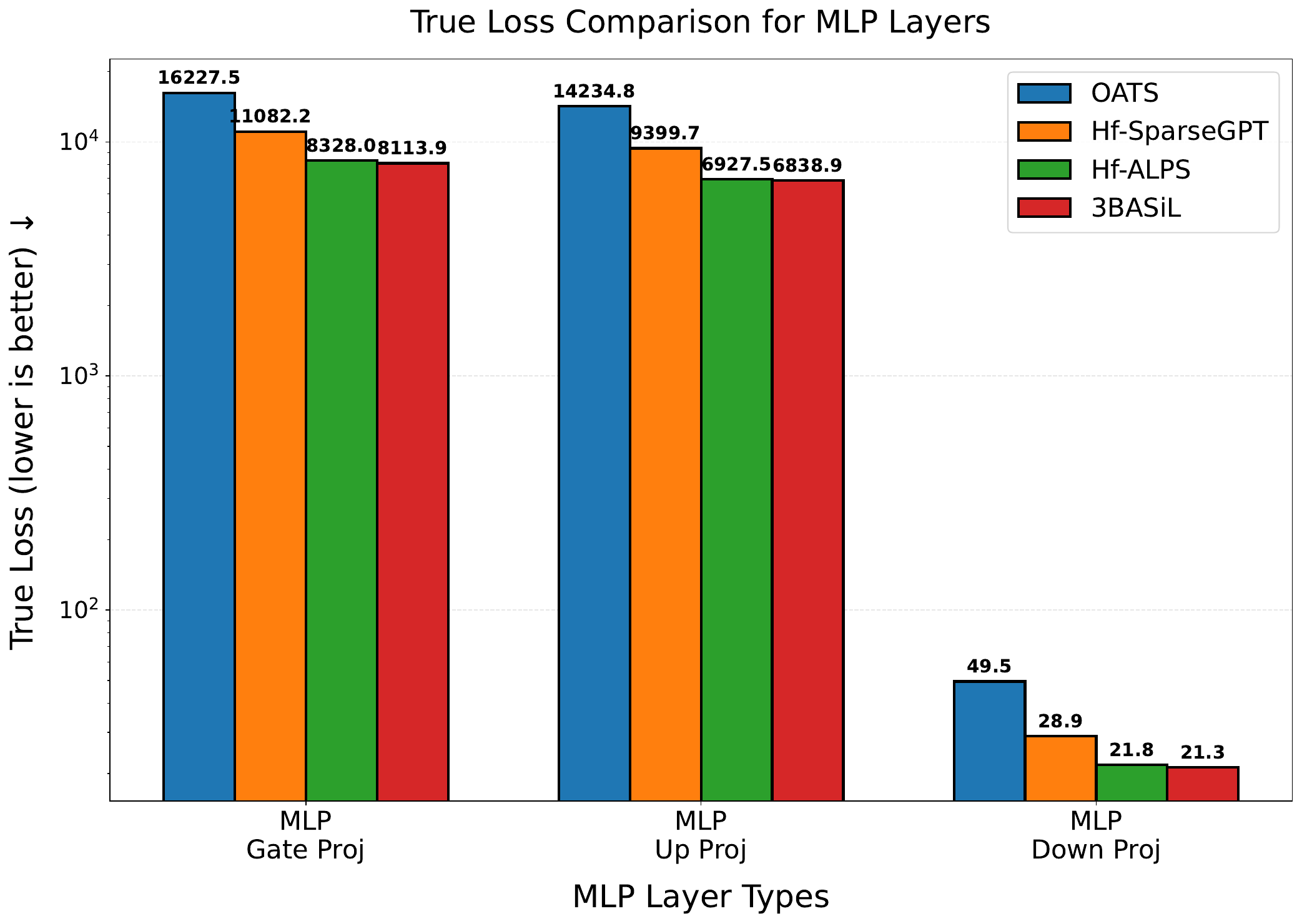}
        \caption{True loss for MLP layers (log scale).}
        \label{fig:mlp-layers-loss}
    \end{subfigure}
    \caption{Comparison of true loss values introduced in \cref{eq:original} across different \slr methods. Lower values indicate better optimization quality. \ourmethod consistently outperforms other methods, particularly for attention layers.}
    \label{fig:loss-comparison}
\end{figure}
\newpage
\section{Additional Experimental Results}\label{sect:appendix-more-experiments}
We provide additional performance results considered in \cref{sec:experimental-results}. We compare different \slr algorithms and their \texttt{TM}-enhanced versions (apply \texttt{TM} as an add-on to the decomposition algorithm). In that case, we add the suffix \texttt{-TM} to the algorithm. We mark algorithms with \texttt{TM} in gray. We also study the results of the \slr decomposition after LoRA fine-tuning as described in \cref{sect:appendix-exp-details}. In that case, we add the prefix \texttt{LFT-} to the algorithm.

Example: \texttt{LFT-OATS-TM} denotes the results of \slr decomposition after (i) using OATS to obtain sparse and low-rank components, (ii) refine these decomposed components with \texttt{TM} and (iii) LoRA fine-tunes the model by using the low-rank components from the \slr decomposition as a smart initialization.
% !TEX root = ../main.tex

\begin{table}[H]
\centering
\resizebox{0.90\textwidth}{!}{%
% [inline block 0: 7 envs, 68733 chars in 7 pieces, piece 1 here, a bare % at each other -> data_tex | \begin{tabular}{l|c|ccc|ccccccccc} \toprule...]
%
}
\vspace{3pt}
\caption{One-shot (N:M Sparse + LR) decomposition performance for Llama-3.2-1B. For Perplexity, $(\downarrow)$ lower values are better. For zero-shot tasks, $(\uparrow)$ higher values are better.}
\end{table}
\begin{table}[H]
\centering
\resizebox{0.90\textwidth}{!}{%
%
%
}
\vspace{3pt}
\caption{One-shot (N:M Sparse + LR) decomposition performance for Llama-3.2-3B. For Perplexity, $(\downarrow)$ lower values are better. For zero-shot tasks, $(\uparrow)$ higher values are better.}
\end{table}
\begin{table}[H]
\centering
\resizebox{0.98\textwidth}{!}{%
%
%
}
\vspace{3pt}
\caption{One-shot (N:M Sparse + LR) decomposition performance for Meta-Llama-3-8B. For Perplexity, $(\downarrow)$ lower values are better. For zero-shot tasks, $(\uparrow)$ higher values are better.}
\end{table}

\begin{table}[H]
\centering
\resizebox{0.98\textwidth}{!}{%
%
%
}
\vspace{3pt}
\caption{(N:M Sparse + LR) decomposition performance for Llama-3.2-1B after LoRa Fine-Tuning (LFT). For Perplexity, $(\downarrow)$ lower values are better. For zero-shot tasks, $(\uparrow)$ higher values are better.}
\end{table}
\begin{table}[H]
\centering
\resizebox{0.98\textwidth}{!}{%
%
%
}
\vspace{3pt}
\caption{(N:M Sparse + LR) decomposition performance for Llama-3.2-3B after LoRa Fine-Tuning (LFT). For Perplexity, $(\downarrow)$ lower values are better. For zero-shot tasks, $(\uparrow)$ higher values are better.}
\end{table}
\begin{table}[H]
\centering
\resizebox{0.98\textwidth}{!}{%
%
%
}
\vspace{3pt}
\caption{(N:M Sparse + LR) decomposition performance for Meta-Llama-3-8B after LoRa Fine-Tuning (LFT). For Perplexity, $(\downarrow)$ lower values are better. For zero-shot tasks, $(\uparrow)$ higher values are better.}
\end{table}

\begin{table}[htbp]
\centering
\resizebox{\textwidth}{!}{%
%
%
}
\vspace{3pt}
\caption{Impact of OWL on \ourmethod for (Unstructured + 64) decompositions of Meta-Llama-3-8B.}
\label{tab:owl_comparison}
\end{table}
% %%%%%%%%%%%%%%%%%%%%%%%%%%%%%%%%%%%%%%%%%%%%%%%%%%%%%%%%%%%%
\clearpage
\section*{NeurIPS Paper Checklist}
\begin{enumerate}

\item {\bf Claims}
    \item[] Question: Do the main claims made in the abstract and introduction accurately reflect the paper's contributions and scope?
    \item[] Answer: \answerYes{} % Replace by \answerYes{}, \answerNo{}, or \answerNA{}.
    \item[] Justification: We conclude the introduction with a paragraph that explicitly outlines the paper’s contributions and scope.
    \item[] Guidelines:
    \begin{itemize}
        \item The answer NA means that the abstract and introduction do not include the claims made in the paper.
        \item The abstract and/or introduction should clearly state the claims made, including the contributions made in the paper and important assumptions and limitations. A No or NA answer to this question will not be perceived well by the reviewers. 
        \item The claims made should match theoretical and experimental results, and reflect how much the results can be expected to generalize to other settings. 
        \item It is fine to include aspirational goals as motivation as long as it is clear that these goals are not attained by the paper. 
    \end{itemize}

\item {\bf Limitations}
    \item[] Question: Does the paper discuss the limitations of the work performed by the authors?
    \item[] Answer: \answerYes{} % Replace by \answerYes{}, \answerNo{}, or \answerNA{}.
    \item[] Justification: We discuss the limitation of our work at the end of Section \ref{sec:conclusion}.
    \item[] Guidelines:
    \begin{itemize}
        \item The answer NA means that the paper has no limitation while the answer No means that the paper has limitations, but those are not discussed in the paper. 
        \item The authors are encouraged to create a separate "Limitations" section in their paper.
        \item The paper should point out any strong assumptions and how robust the results are to violations of these assumptions (e.g., independence assumptions, noiseless settings, model well-specification, asymptotic approximations only holding locally). The authors should reflect on how these assumptions might be violated in practice and what the implications would be.
        \item The authors should reflect on the scope of the claims made, e.g., if the approach was only tested on a few datasets or with a few runs. In general, empirical results often depend on implicit assumptions, which should be articulated.
        \item The authors should reflect on the factors that influence the performance of the approach. For example, a facial recognition algorithm may perform poorly when image resolution is low or images are taken in low lighting. Or a speech-to-text system might not be used reliably to provide closed captions for online lectures because it fails to handle technical jargon.
        \item The authors should discuss the computational efficiency of the proposed algorithms and how they scale with dataset size.
        \item If applicable, the authors should discuss possible limitations of their approach to address problems of privacy and fairness.
        \item While the authors might fear that complete honesty about limitations might be used by reviewers as grounds for rejection, a worse outcome might be that reviewers discover limitations that aren't acknowledged in the paper. The authors should use their best judgment and recognize that individual actions in favor of transparency play an important role in developing norms that preserve the integrity of the community. Reviewers will be specifically instructed to not penalize honesty concerning limitations.
    \end{itemize}

\item {\bf Theory assumptions and proofs}
    \item[] Question: For each theoretical result, does the paper provide the full set of assumptions and a complete (and correct) proof?
    \item[] Answer: \answerYes{} % Replace by \answerYes{}, \answerNo{}, or \answerNA{}.
    \item[] Justification: We clearly state all assumptions in Theorem \ref{thm:admm} and provide a rigorous proof in Appendix \ref{sect:proofadmm}.
    \item[] Guidelines:
    \begin{itemize}
        \item The answer NA means that the paper does not include theoretical results. 
        \item All the theorems, formulas, and proofs in the paper should be numbered and cross-referenced.
        \item All assumptions should be clearly stated or referenced in the statement of any theorems.
        \item The proofs can either appear in the main paper or the supplemental material, but if they appear in the supplemental material, the authors are encouraged to provide a short proof sketch to provide intuition. 
        \item Inversely, any informal proof provided in the core of the paper should be complemented by formal proofs provided in appendix or supplemental material.
        \item Theorems and Lemmas that the proof relies upon should be properly referenced. 
    \end{itemize}

    \item {\bf Experimental result reproducibility}
    \item[] Question: Does the paper fully disclose all the information needed to reproduce the main experimental results of the paper to the extent that it affects the main claims and/or conclusions of the paper (regardless of whether the code and data are provided or not)?
    \item[] Answer: \answerYes{} % Replace by \answerYes{}, \answerNo{}, or \answerNA{}.
    \item[] Justification: We present a detailed description of the proposed 3-Block ADMM algorithm, including update rules and computational procedures, in Section~\ref{sec:3blockadmm}, and describe the Transformer matching procedure in Section~\ref{sec:transmatch}. Additional implementation details necessary for reproducing our results are provided in Appendix~\ref{sect:appendix-exp-details}.
    \item[] Guidelines:
    \begin{itemize}
        \item The answer NA means that the paper does not include experiments.
        \item If the paper includes experiments, a No answer to this question will not be perceived well by the reviewers: Making the paper reproducible is important, regardless of whether the code and data are provided or not.
        \item If the contribution is a dataset and/or model, the authors should describe the steps taken to make their results reproducible or verifiable. 
        \item Depending on the contribution, reproducibility can be accomplished in various ways. For example, if the contribution is a novel architecture, describing the architecture fully might suffice, or if the contribution is a specific model and empirical evaluation, it may be necessary to either make it possible for others to replicate the model with the same dataset, or provide access to the model. In general. releasing code and data is often one good way to accomplish this, but reproducibility can also be provided via detailed instructions for how to replicate the results, access to a hosted model (e.g., in the case of a large language model), releasing of a model checkpoint, or other means that are appropriate to the research performed.
        \item While NeurIPS does not require releasing code, the conference does require all submissions to provide some reasonable avenue for reproducibility, which may depend on the nature of the contribution. For example
        \begin{enumerate}
            \item If the contribution is primarily a new algorithm, the paper should make it clear how to reproduce that algorithm.
            \item If the contribution is primarily a new model architecture, the paper should describe the architecture clearly and fully.
            \item If the contribution is a new model (e.g., a large language model), then there should either be a way to access this model for reproducing the results or a way to reproduce the model (e.g., with an open-source dataset or instructions for how to construct the dataset).
            \item We recognize that reproducibility may be tricky in some cases, in which case authors are welcome to describe the particular way they provide for reproducibility. In the case of closed-source models, it may be that access to the model is limited in some way (e.g., to registered users), but it should be possible for other researchers to have some path to reproducing or verifying the results.
        \end{enumerate}
    \end{itemize}

\item {\bf Open access to data and code}
    \item[] Question: Does the paper provide open access to the data and code, with sufficient instructions to faithfully reproduce the main experimental results, as described in supplemental material?
    \item[] Answer: \answerYes{} % Replace by \answerYes{}, \answerNo{}, or \answerNA{}.
    \item[] Justification:  We will release the codes if the paper is accepted.
    \item[] Guidelines:
    \begin{itemize}
        \item The answer NA means that paper does not include experiments requiring code.
        \item Please see the NeurIPS code and data submission guidelines (\url{https://nips.cc/public/guides/CodeSubmissionPolicy}) for more details.
        \item While we encourage the release of code and data, we understand that this might not be possible, so “No” is an acceptable answer. Papers cannot be rejected simply for not including code, unless this is central to the contribution (e.g., for a new open-source benchmark).
        \item The instructions should contain the exact command and environment needed to run to reproduce the results. See the NeurIPS code and data submission guidelines (\url{https://nips.cc/public/guides/CodeSubmissionPolicy}) for more details.
        \item The authors should provide instructions on data access and preparation, including how to access the raw data, preprocessed data, intermediate data, and generated data, etc.
        \item The authors should provide scripts to reproduce all experimental results for the new proposed method and baselines. If only a subset of experiments are reproducible, they should state which ones are omitted from the script and why.
        \item At submission time, to preserve anonymity, the authors should release anonymized versions (if applicable).
        \item Providing as much information as possible in supplemental material (appended to the paper) is recommended, but including URLs to data and code is permitted.
    \end{itemize}

\item {\bf Experimental setting/details}
    \item[] Question: Does the paper specify all the training and test details (e.g., data splits, hyperparameters, how they were chosen, type of optimizer, etc.) necessary to understand the results?
    \item[] Answer: \answerYes{} % Replace by \answerYes{}, \answerNo{}, or \answerNA{}.
    \item[] Justification: We provide detailed training and evaluation settings for both our proposed pipeline and the baseline methods in Appendix~\ref{sect:appendix-exp-details}.
    \item[] Guidelines:
    \begin{itemize}
        \item The answer NA means that the paper does not include experiments.
        \item The experimental setting should be presented in the core of the paper to a level of detail that is necessary to appreciate the results and make sense of them.
        \item The full details can be provided either with the code, in appendix, or as supplemental material.
    \end{itemize}

\item {\bf Experiment statistical significance}
    \item[] Question: Does the paper report error bars suitably and correctly defined or other appropriate information about the statistical significance of the experiments?
    \item[] Answer: \answerNo{}
    \item[] Justification: While we aimed to provide rigorous evaluation, we were constrained by computational resources and thus could not include statistical significance measures. We have, however, ensured consistent settings and fair comparisons across all baselines.
    \item[] Guidelines:
    \begin{itemize}
        \item The answer NA means that the paper does not include experiments.
        \item The authors should answer "Yes" if the results are accompanied by error bars, confidence intervals, or statistical significance tests, at least for the experiments that support the main claims of the paper.
        \item The factors of variability that the error bars are capturing should be clearly stated (for example, train/test split, initialization, random drawing of some parameter, or overall run with given experimental conditions).
        \item The method for calculating the error bars should be explained (closed form formula, call to a library function, bootstrap, etc.)
        \item The assumptions made should be given (e.g., Normally distributed errors).
        \item It should be clear whether the error bar is the standard deviation or the standard error of the mean.
        \item It is OK to report 1-sigma error bars, but one should state it. The authors should preferably report a 2-sigma error bar than state that they have a 96\% CI, if the hypothesis of Normality of errors is not verified.
        \item For asymmetric distributions, the authors should be careful not to show in tables or figures symmetric error bars that would yield results that are out of range (e.g. negative error rates).
        \item If error bars are reported in tables or plots, The authors should explain in the text how they were calculated and reference the corresponding figures or tables in the text.
    \end{itemize}

\item {\bf Experiments compute resources}
    \item[] Question: For each experiment, does the paper provide sufficient information on the computer resources (type of compute workers, memory, time of execution) needed to reproduce the experiments?
    \item[] Answer: \answerYes{} % Replace by \answerYes{}, \answerNo{}, or \answerNA{}.
    \item[] Justification: We provide details of the computational resources used for our experiments in Appendix \ref{sect:appendix-exp-details}.
    \item[] Guidelines:
    \begin{itemize}
        \item The answer NA means that the paper does not include experiments.
        \item The paper should indicate the type of compute workers CPU or GPU, internal cluster, or cloud provider, including relevant memory and storage.
        \item The paper should provide the amount of compute required for each of the individual experimental runs as well as estimate the total compute. 
        \item The paper should disclose whether the full research project required more compute than the experiments reported in the paper (e.g., preliminary or failed experiments that didn't make it into the paper). 
    \end{itemize}
    
\item {\bf Code of ethics}
    \item[] Question: Does the research conducted in the paper conform, in every respect, with the NeurIPS Code of Ethics \url{https://neurips.cc/public/EthicsGuidelines}?
    \item[] Answer: \answerYes{} % Replace by \answerYes{}, \answerNo{}, or \answerNA{}.
    \item[] Justification: We have carefully reviewed the NeurIPS Code of Ethics and confirm that all research presented in this paper adheres to its principles.
    \item[] Guidelines:
    \begin{itemize}
        \item The answer NA means that the authors have not reviewed the NeurIPS Code of Ethics.
        \item If the authors answer No, they should explain the special circumstances that require a deviation from the Code of Ethics.
        \item The authors should make sure to preserve anonymity (e.g., if there is a special consideration due to laws or regulations in their jurisdiction).
    \end{itemize}

\item {\bf Broader impacts}
    \item[] Question: Does the paper discuss both potential positive societal impacts and negative societal impacts of the work performed?
    \item[] Answer: \answerNA{} % Replace by \answerYes{}, \answerNo{}, or \answerNA{}.
    \item[] Justification: To the best of our knowledge, our work has no societal impact.
    \item[] Guidelines:
    \begin{itemize}
        \item The answer NA means that there is no societal impact of the work performed.
        \item If the authors answer NA or No, they should explain why their work has no societal impact or why the paper does not address societal impact.
        \item Examples of negative societal impacts include potential malicious or unintended uses (e.g., disinformation, generating fake profiles, surveillance), fairness considerations (e.g., deployment of technologies that could make decisions that unfairly impact specific groups), privacy considerations, and security considerations.
        \item The conference expects that many papers will be foundational research and not tied to particular applications, let alone deployments. However, if there is a direct path to any negative applications, the authors should point it out. For example, it is legitimate to point out that an improvement in the quality of generative models could be used to generate deepfakes for disinformation. On the other hand, it is not needed to point out that a generic algorithm for optimizing neural networks could enable people to train models that generate Deepfakes faster.
        \item The authors should consider possible harms that could arise when the technology is being used as intended and functioning correctly, harms that could arise when the technology is being used as intended but gives incorrect results, and harms following from (intentional or unintentional) misuse of the technology.
        \item If there are negative societal impacts, the authors could also discuss possible mitigation strategies (e.g., gated release of models, providing defenses in addition to attacks, mechanisms for monitoring misuse, mechanisms to monitor how a system learns from feedback over time, improving the efficiency and accessibility of ML).
    \end{itemize}
    
\item {\bf Safeguards}
    \item[] Question: Does the paper describe safeguards that have been put in place for responsible release of data or models that have a high risk for misuse (e.g., pretrained language models, image generators, or scraped datasets)?
    \item[] Answer: \answerNA{}  % Replace by \answerYes{}, \answerNo{}, or \answerNA{}.
    \item[] Justification: To the best of our knowledge, our work poses no such risks.
    \item[] Guidelines:
    \begin{itemize}
        \item The answer NA means that the paper poses no such risks.
        \item Released models that have a high risk for misuse or dual-use should be released with necessary safeguards to allow for controlled use of the model, for example by requiring that users adhere to usage guidelines or restrictions to access the model or implementing safety filters. 
        \item Datasets that have been scraped from the Internet could pose safety risks. The authors should describe how they avoided releasing unsafe images.
        \item We recognize that providing effective safeguards is challenging, and many papers do not require this, but we encourage authors to take this into account and make a best faith effort.
    \end{itemize}

\item {\bf Licenses for existing assets}
    \item[] Question: Are the creators or original owners of assets (e.g., code, data, models), used in the paper, properly credited and are the license and terms of use explicitly mentioned and properly respected?
    \item[] Answer: \answerYes{} % Replace by \answerYes{}, \answerNo{}, or \answerNA{}.
    \item[] Justification: At the start of Section~\ref{sec:experimental-results}, we reference all datasets and models involved in our experiments. The sources of the code used are listed in Appendix~\ref{sect:appendix-exp-details}.
    \item[] Guidelines:
    \begin{itemize}
        \item The answer NA means that the paper does not use existing assets.
        \item The authors should cite the original paper that produced the code package or dataset.
        \item The authors should state which version of the asset is used and, if possible, include a URL.
        \item The name of the license (e.g., CC-BY 4.0) should be included for each asset.
        \item For scraped data from a particular source (e.g., website), the copyright and terms of service of that source should be provided.
        \item If assets are released, the license, copyright information, and terms of use in the package should be provided. For popular datasets, \url{paperswithcode.com/datasets} has curated licenses for some datasets. Their licensing guide can help determine the license of a dataset.
        \item For existing datasets that are re-packaged, both the original license and the license of the derived asset (if it has changed) should be provided.
        \item If this information is not available online, the authors are encouraged to reach out to the asset's creators.
    \end{itemize}

\item {\bf New assets}
    \item[] Question: Are new assets introduced in the paper well documented and is the documentation provided alongside the assets?
    \item[] Answer: \answerNA{} % Replace by \answerYes{}, \answerNo{}, or \answerNA{}.
    \item[] Justification: Our paper does not release new assets.
    \item[] Guidelines:
    \begin{itemize}
        \item The answer NA means that the paper does not release new assets.
        \item Researchers should communicate the details of the dataset/code/model as part of their submissions via structured templates. This includes details about training, license, limitations, etc. 
        \item The paper should discuss whether and how consent was obtained from people whose asset is used.
        \item At submission time, remember to anonymize your assets (if applicable). You can either create an anonymized URL or include an anonymized zip file.
    \end{itemize}

\item {\bf Crowdsourcing and research with human subjects}
    \item[] Question: For crowdsourcing experiments and research with human subjects, does the paper include the full text of instructions given to participants and screenshots, if applicable, as well as details about compensation (if any)? 
    \item[] Answer:  \answerNA{}  % Replace by \answerYes{}, \answerNo{}, or \answerNA{}.
    \item[] Justification: Our paper does not involve crowdsourcing nor research with human subjects.
    \item[] Guidelines:
    \begin{itemize}
        \item The answer NA means that the paper does not involve crowdsourcing nor research with human subjects.
        \item Including this information in the supplemental material is fine, but if the main contribution of the paper involves human subjects, then as much detail as possible should be included in the main paper. 
        \item According to the NeurIPS Code of Ethics, workers involved in data collection, curation, or other labor should be paid at least the minimum wage in the country of the data collector. 
    \end{itemize}

\item {\bf Institutional review board (IRB) approvals or equivalent for research with human subjects}
    \item[] Question: Does the paper describe potential risks incurred by study participants, whether such risks were disclosed to the subjects, and whether Institutional Review Board (IRB) approvals (or an equivalent approval/review based on the requirements of your country or institution) were obtained?
    \item[] Answer: \answerNA{} % Replace by \answerYes{}, \answerNo{}, or \answerNA{}.
    \item[] Justification:  Our paper does not involve crowdsourcing nor research with human subjects.
    \item[] Guidelines:
    \begin{itemize}
        \item The answer NA means that the paper does not involve crowdsourcing nor research with human subjects.
        \item Depending on the country in which research is conducted, IRB approval (or equivalent) may be required for any human subjects research. If you obtained IRB approval, you should clearly state this in the paper. 
        \item We recognize that the procedures for this may vary significantly between institutions and locations, and we expect authors to adhere to the NeurIPS Code of Ethics and the guidelines for their institution. 
        \item For initial submissions, do not include any information that would break anonymity (if applicable), such as the institution conducting the review.
    \end{itemize}

\item {\bf Declaration of LLM usage}
    \item[] Question: Does the paper describe the usage of LLMs if it is an important, original, or non-standard component of the core methods in this research? Note that if the LLM is used only for writing, editing, or formatting purposes and does not impact the core methodology, scientific rigorousness, or originality of the research, declaration is not required.
    %this research? 
    \item[] Answer: \answerNA{} % Replace by \answerYes{}, \answerNo{}, or \answerNA{}.   
    \item[] Justification: Although our work focuses on pruning LLMs, the core methods proposed do not involve LLMs as important, original, or non-standard components of the algorithm itself.
    \item[] Guidelines:
    \begin{itemize}
        \item The answer NA means that the core method development in this research does not involve LLMs as any important, original, or non-standard components.
        \item Please refer to our LLM policy (\url{https://neurips.cc/Conferences/2025/LLM}) for what should or should not be described.
    \end{itemize}

\end{enumerate}

\end{document}